\documentclass[11pt]{article}

\usepackage[preprint]{acl}
\usepackage{times}
\usepackage{latexsym}
\usepackage[T1]{fontenc}
\usepackage[utf8]{inputenc}
\usepackage{microtype}
\usepackage{inconsolata}

\usepackage{graphicx}
\usepackage{xcolor}
\usepackage{booktabs}
\usepackage{multirow}
\usepackage{colortbl}
\usepackage{makecell}
\usepackage{tabularx}
\usepackage{array}
\usepackage{amsmath,amssymb,amsfonts,amsthm}
\usepackage{bm,bbm,dsfont,mathrsfs}
\usepackage{algorithm}
\usepackage{algpseudocode}
\usepackage{enumitem}
\usepackage{tikz}
\usetikzlibrary{positioning,arrows.meta,calc,shapes.geometric,fit,backgrounds,decorations.pathmorphing,patterns,shadings}
\usepackage{pifont}
\usepackage{xspace}

\definecolor{tridentblue}{HTML}{1F4E79}
\definecolor{tridentred}{HTML}{B22222}
\definecolor{tridentgreen}{HTML}{1E6B3A}
\definecolor{tridentamber}{HTML}{C77600}
\definecolor{tablerow}{HTML}{EEF3F8}
\definecolor{tablehead}{HTML}{D6E1EC}
\definecolor{oursrow}{HTML}{FFF1D6}
\definecolor{softgrey}{HTML}{F4F4F4}

\newtheorem{theorem}{Theorem}
\newtheorem{lemma}[theorem]{Lemma}

\newtheorem{assumption}[theorem]{Assumption}
\newtheorem{principle}{Design Principle}

\newcommand{\R}{\mathbb{R}}
\newcommand{\E}{\mathbb{E}}

\newcommand{\calS}{\mathcal{S}}\newcommand{\calA}{\mathcal{A}}
\newcommand{\calP}{\mathcal{P}}
\newcommand{\calN}{\mathcal{N}}\newcommand{\calB}{\mathcal{B}}
\newcommand{\calM}{\mathcal{M}}\newcommand{\calL}{\mathcal{L}}
\newcommand{\calO}{\mathcal{O}}
\DeclareMathOperator*{\argmax}{arg\,max}

\DeclareMathOperator{\softmax}{softmax}
\DeclareMathOperator{\diag}{diag}

\newcommand{\norm}[1]{\left\|#1\right\|}

\newcommand{\method}{\textsc{Trident}\xspace}
\newcommand{\sha}{\textsc{Sha}\xspace}
\newcommand{\lcpo}{\textsc{Lcpo}\xspace}
\newcommand{\pirc}{\textsc{Pirc}\xspace}
\newcommand{\stgc}{\textsc{Stgc}\xspace}
\newcommand{\best}[1]{\textbf{\color{tridentblue}#1}}
\newcommand{\secondbest}[1]{\underline{#1}}
\newcommand{\ours}{\rowcolor{oursrow}\,$\bigstar$\,\textbf{\method (Ours)}}
\newcommand{\up}{\textcolor{tridentgreen}{$\uparrow$}}
\newcommand{\down}{\textcolor{tridentred}{$\downarrow$}}

\title{\method: Breaking the Hybrid--Safety--Physics Coupling \\for Provably Safe Multi-Agent Reinforcement Learning}

\author{
  Zijie Meng\thanks{Equal contribution.}\thanks{Corresponding author: \texttt{ymlf@stu.pku.edu.cn}}, \;
  Ziwei Li, \; Zhiyu Li, \; Jiyuan Liu \\
  Peking University
  \AND
  Yufei Liu\footnotemark[1] \\
  Xiamen University
  \And
  Wenhua Nie \\
  National Taiwan University
  \And
  Bingcai Wei \\
  WHU
  \And
  Miao Zhang \\
  THU \;/\; Jimei University
}

\date{}

\begin{document}
\maketitle

\begin{abstract}
Safe coordination in networked cyber-physical systems forces learning algorithms to simultaneously handle \emph{hybrid discrete--continuous actions}, \emph{hard training-time safety constraints}, and \emph{physics-governed dynamics}. We show that these three features form a directed cycle of biases that defeats any naive composition of off-the-shelf modules, and formalize this as a \emph{three-way coupling lemma}. We then introduce \method, the first MARL framework whose three components are co-designed to cancel each leak: a Richardson--Romberg gradient correction reducing Gumbel-Softmax bias from $\mathcal{O}(\tau)$ to $\mathcal{O}(\tau^{2})$, a Lyapunov-constrained sequential trust-region update enforcing per-iterate feasibility, and a physics-informed \emph{residual critic} that decomposes value rather than reward. We prove an $\tilde{\mathcal{O}}(1/\sqrt{K})$ convergence rate to a constrained Nash equilibrium and an $\mathcal{O}(\sqrt{K})$ cumulative-violation bound. On multi-UAV mobile-edge computing, autonomous intersection management, and a hybrid SMAC variant, \method cuts training-time violations by $95.5\%$ over MADDPG and $76.3\%$ over MACPO, while improving reward by $13.5\%$ over the strongest unconstrained baseline.
\end{abstract}

\section{Introduction}
\label{sec:intro}

Consider a fleet of unmanned aerial vehicles (UAVs) deployed over a disaster area to provide mobile edge computing services to ground first responders \citep{wang2021maddpg_uav,zhou2023uav_survey}. Within tens of milliseconds, each UAV must select which of several heterogeneous backbone servers will relay a rescue worker's video stream (a discrete choice over a small set of links), decide how much of the upcoming computation to offload (a continuous fraction in $[0,1]$), and update its trajectory while keeping battery, coverage, and inter-UAV separation strictly within hardware limits. Unlike a recommender or a chess engine that can afford a regret-then-improve curve, every unsafe action committed during training has physical, irreversible consequences---a depleted battery, a mid-air near-miss, a dropped emergency video stream \citep{garcia2015comprehensive,brunke2022safe}. This scene is not exotic; it is the prototypical pattern of safe coordination in \emph{networked cyber-physical systems} (CPS), shared by autonomous-intersection vehicles, robot warehouses, and connected-vehicle platoons \citep{zhou2021smarts,meng2026argus, liu2025synpo, wei2025robust}.

A close look at such systems reveals that three structural features appear together, never in isolation. The first is a hybrid action structure (\textbf{F1}): the decision factorizes as $a=(a^d,a^c)$ in which $a^d$ names a mode (which server, which lane, which target) and $a^c$ parameterizes its execution (offload ratio, throttle, aim point); discretizing $a^c$ destroys resolution while relaxing $a^d$ produces infeasible interpolations between physically incompatible modes \citep{fu2019deep_mapqn,fan2019hybrid}. The second is hard, training-time safety (\textbf{F2}): cost thresholds must be respected not only at convergence but at every iterate the policy is allowed to execute on hardware \citep{achiam2017cpo,chow2018lyapunov,gu2021macpo,gu2024safe_marl_gne}. The third is physics-governed dynamics (\textbf{F3}): substantial portions of the transition kernel and reward follow closed-form physics---Shannon capacity, Friis path loss, Newton's equations---and rediscovering them from scratch wastes orders of magnitude of samples \citep{karniadakis2021physics,banerjee2023physics,cao2024physics, meng2026decouplingsemanticsdistortionsmultiscale, meng2025orpaint, liu2026omnidirector}.

A natural first reaction is to take an off-the-shelf hybrid-action MARL method \citep{fu2019deep_mapqn}, wrap it in a safety procedure such as MACPO \citep{gu2021macpo}, and add a physics-shaped reward term. We tried exactly this composition; the result is unstable, often \emph{worse} than each component in isolation. The root cause, which we make precise in Section~\ref{sec:coupling}, is that the three features form a tight directed cycle of errors rather than a list of independent issues (as illustrated in Figure~\ref{fig:coupling}). Standard Gumbel-Softmax estimators carry an $\mathcal{O}(\tau)$ gradient bias \citep{jang2017gumbel,maddison2017concrete}; substituted into a Lagrangian or trust-region safety update, this bias produces a multiplier that oscillates instead of decreasing the Lyapunov function, so safety guarantees that hold under an exact gradient cease to hold under the biased one (F1$\to$F2). A physics-agnostic safety critic must regress cost-value functions that are highly multi-modal across discrete branches---offloading to fog server 1 versus 2 yields qualitatively different energy curves---and without physics priors it underestimates feasibility margins on rarely visited branches, inducing recovery to over-correct on the wrong branch (F2$\to$F3). Conversely, the standard remedy of folding physics into a single scalar reward-shaping term shifts the soft-Bellman fixed point and destroys the per-branch structure the discrete sub-policy is meant to exploit, so the discrete head learns degenerate, single-mode behaviour (F3$\to$F1). These dependencies form a directed cycle: any module designed in isolation leaks errors into the next, which leaks them back. Treating the three challenges separately is therefore not merely suboptimal---it is provably circular.

We argue that the right level of abstraction is neither ``add safety to MARL'' nor ``add physics to safe RL'', but a joint object: a constrained hybrid-action policy whose gradients are shaped by physics and whose updates are shaped by Lyapunov constraints. The three-way coupling above gives three concrete design principles, each instantiated as one component of \method (the \textbf{T}emperature-corrected, \textbf{R}esidual, \textbf{I}nfinitesimally feasible, \textbf{DE}coupled, seque\textbf{NT}ial framework). Because the components are co-designed, the residual error of one no longer enters the others' guarantees, and a single convergence-and-safety analysis closes the loop. Concretely, our contributions are fourfold:
\begin{itemize}[leftmargin=1.1em,itemsep=2pt,topsep=2pt]
\item \textbf{A coupling lemma} that formalizes why hybrid actions, hard safety, and physics priors cannot be composed naively, and uniquely determines the architecture of any correct fix.
\item \textbf{\method}, the first MARL framework that co-designs hybrid-action, safety, and physics modules so their residual errors no longer feed into one another's guarantees.
\item \textbf{Joint guarantees}: $\tilde{\mathcal{O}}(1/\sqrt{K})$ convergence to a constrained Nash equilibrium, $\mathcal{O}(\sqrt{K})$ cumulative violation, and a physics-driven sample-complexity reduction.
\item \textbf{Strong empirical results} on UAV mobile-edge computing, autonomous intersection management, and a hybrid SMAC variant: $95.5\%$ fewer violations than MADDPG, $76.3\%$ fewer than MACPO, $13.5\%$ higher reward, scaling to 32 agents.
\end{itemize}

\section{Related Work}
\label{sec:related}

\textbf{Hybrid-action MARL.} Deep MAPQN or MAHHQN \citep{fu2019deep_mapqn} pioneered DRL on discrete--continuous spaces, and subsequent work refines parameterized-action factorizations \citep{wang2022hybrid,fan2019hybrid}. None provides convergence rates or safety guarantees, and all rely on standard Gumbel-Softmax estimators \citep{jang2017gumbel,maddison2017concrete} whose $\mathcal{O}(\tau)$ gradient bias is precisely the source of the F1$\to$F2 leakage we identify.

\textbf{Safe MARL.} MACPO \citep{gu2021macpo} extends CPO \citep{achiam2017cpo} with multi-agent trust-region updates and monotonic-improvement guarantees; MAPPO-Lagrangian, built on Safety Gym \citep{ray2019safetygym,yu2022mappo}, only guarantees feasibility \emph{at convergence}; the most recent MADAC \citep{gu2024safe_marl_gne} establishes generalized-Nash convergence but inherits the unbiased-gradient assumption that F1 violates; and shielding methods \citep{elsayed2021shield,alshiekh2018shield} guarantee learning-time safety only with hand-designed shields and cannot accommodate hybrid actions. The Lyapunov-based approach of \citet{chow2018lyapunov} and its extensions \citep{huh2020lyapunov} is the closest precedent for our safety mechanism, but is restricted to single-agent, continuous-action problems and assumes an unbiased policy gradient.

\textbf{Physics-informed and residual RL.} Residual policy learning \citep{silver2018residual,johannink2019residual} composes a model-based prior with a learned correction; physics-regulated DRL \citep{cao2024physics} and physics-informed MBRL \citep{ramesh2023physics} exploit known dynamics. Their reward-shaping variants, however, suffer the F3$\to$F1 leakage we identify, since additive shaping shifts the soft-Bellman fixed point \citep{ng1999policy_invariance}. We adapt residual ideas to a centralized multi-agent critic and quantify, for the first time, the resulting variance reduction in a constrained-MARL setting.


\section{Preliminaries}
\label{sec:prelim}

We model CPS coordination as a \emph{Constrained Multi-Agent MDP} (C-MAMDP) $\calM\!=\!(\mathcal{N},\calS,\{\calA_i\}_i,\calP,r,\{c_k,d_k\}_{k=1}^K,\gamma)$ with $N$ agents, global state $\calS$, hybrid per-agent action space $\calA_i\!=\!\calA_i^d\!\times\!\calA_i^c$ ($\calA_i^d\!=\!\{1,\ldots,M_i\}$ discrete, $\calA_i^c\!\subseteq\!\R^{p_i}$ continuous), kernel $\calP$, shared reward $r$, $K$ bounded costs $c_k\!:\!\calS\!\times\!\calA\!\to\![0,C_{\max}]$ with thresholds $d_k$, and discount $\gamma\!\in\!(0,1)$. Each agent $i$ holds a local policy $\pi_i\!:\!\calO_i\!\to\!\Delta(\calA_i)$ on observation $o_i$; we adopt the standard centralized-training, decentralized-execution (CTDE) paradigm \citep{lowe2017maddpg}, where centralized critics access the full state in training while actors rely solely on local observations at deployment.

Given a joint policy $\bm\pi$, the value and per-constraint cost-value functions are $V^{\bm\pi}\!(s)\!=\!\E_{\bm\pi}\!\big[\sum_t\gamma^t r_t|s_0\!=\!s\big]$ and $V_{c_k}^{\bm\pi}\!(s)\!=\!\E_{\bm\pi}\!\big[\sum_t\gamma^t c_k(s_t,\bm a_t)|s_0\!=\!s\big]$. The objective is a \emph{constrained Nash equilibrium} (CNE): $\max_{\bm\pi}\E_{s_0\sim\rho}V^{\bm\pi}(s_0)$ s.t.\ $\E_{s_0}V_{c_k}^{\bm\pi}(s_0)\!\le\!d_k$ for all $k$, i.e.\ a joint policy from which no agent can unilaterally improve its constrained return---the equilibrium concept used in recent safe-MARL theory \citep{gu2021macpo,gu2024safe_marl_gne}. For hybrid actions, we factorize $\pi_i(a_i|o_i)\!=\!\pi_i^d(a_i^d|o_i)\,\pi_i^c(a_i^c|o_i,a_i^d)$, with $\pi_i^d$ categorical and $\pi_i^c$ a Gaussian conditioned on the discrete choice. This conditional---rather than joint or product---factorization is essential because the continuous parameters change meaning across discrete modes: the same scalar ``power'' carries different physical units on different communication links, so a single shared continuous head would entangle physically incompatible regimes.

\section{The Three-Way Coupling Challenge}
\label{sec:coupling}

This section formalises the intuition of \S\ref{sec:intro}: features (F1)--(F3) induce a directed cycle of bias that closes through the actor, the safety critic, and the reward critic of any naive composition. Quantifying the cycle (Lemma~\ref{lem:coupling}) directly dictates the form of \method.

\begin{figure}[t]
\centering
\resizebox{0.99\columnwidth}{!}{%
\begin{tikzpicture}[
  >=Latex,
  mod/.style={draw, line width=0.9pt, rectangle, rounded corners=2pt, minimum height=0.85cm, minimum width=2.6cm, font=\small\bfseries, align=center, fill=tablehead},
  arr/.style={->, very thick, color=tridentred, decorate, decoration={snake, amplitude=.6mm, segment length=2.8mm, post length=1.5mm}},
  fix/.style={->, very thick, color=tridentgreen, dashed},
  lab/.style={font=\scriptsize, align=center, fill=white, inner sep=1pt}
]
\node[mod] (A) at (0,0)     {Hybrid Actor\\\footnotesize $\pi^d\!\cdot\!\pi^c$ {\scriptsize (F1)}};
\node[mod] (B) at (5.4,0)   {Safety Critic\\\footnotesize $L_k$ {\scriptsize (F2)}};
\node[mod] (C) at (2.7,-2.8){Reward Critic\\\footnotesize $Q_\phi$ {\scriptsize (F3)}};
\draw[arr] (A.east)         to[bend left=15]  node[lab,above]      {F1$\!\to\!$F2: GS bias $\mathcal{O}(\tau)$} (B.west);
\draw[arr] (B.south)        to[bend left=15]  node[lab,right=2pt]  {F2$\!\to\!$F3: mis-estim.\\feasibility} (C.north east);
\draw[arr] (C.north west)   to[bend left=15]  node[lab,left=2pt]   {F3$\!\to\!$F1: shaping\\flattens modes} (A.south);
\draw[fix] (A.south east)   to[bend right=10] node[lab,below]      {\stgc: $\mathcal{O}(\tau^2)$} (B.south west);
\draw[fix] (B.west)         to[bend right=18] node[lab,left]       {\lcpo TR\\$\delta_\text{TR}\!=\!\tilde{\mathcal{O}}(1/\sqrt K)$} (A.north east);
\draw[fix] (C.east)         to[bend right=12] node[lab,right]      {\pirc:\\$Q_\text{phys}$ frozen} (B.south);
\end{tikzpicture}}
\caption{\textbf{Three-way coupling.} \textcolor{tridentred}{Red wavy arrows}: the bias-leakage cycle of any naive composition; \textcolor{tridentgreen}{green dashed arrows}: the three co-designed mechanisms in \method that cancel each leak (Lemma~\ref{lem:coupling}).}
\label{fig:coupling}
\end{figure}

Let $\beta_\text{GS}\!:=\!\norm{\E[\hat g^d]-g^d}$ be the discrete-branch gradient bias, $\epsilon_Q$ the reward-critic MSE, and $\eta_s$ the safety-step magnitude; Figure~\ref{fig:coupling} sketches their dependencies.

\begin{lemma}[Bias Propagation in Naive Composition]
\label{lem:coupling}
For a baseline using a Gumbel-Softmax estimator with bias $\beta_\text{GS}$, a Lagrangian or trust-region safety step of magnitude $\eta_s$, and a critic with MSE $\epsilon_Q$, the per-iteration constraint-violation increment satisfies
\begin{equation}
\Delta_k^t\!\le\!\underbrace{\eta_s\beta_\text{GS}\norm{\nabla_a A_{c_k}}_{\!\infty}}_{\text{F1}\to\text{F2}\text{ leak}}\!+\!\underbrace{\eta_s\epsilon_Q L_\pi}_{\text{F2}\to\text{F3}\text{ leak}}\!+\!\underbrace{C_\text{flat}\norm{Q_\text{phys}\!-\!Q^*}_{\!\infty}}_{\text{F3}\to\text{F1}\text{ leak}},
\label{eq:coupling}
\end{equation}
where $L_\pi$ is the policy Lipschitz constant and $C_\text{flat}$ measures the entropy-flattening of misspecified shaping. (Proof: Appendix~\ref{app:coupling}.)
\end{lemma}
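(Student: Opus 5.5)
The plan is to write the violation increment as a first-order change in the cost value between consecutive iterates, $\Delta_k^t := J_{c_k}(\bm\pi_{t+1})-J_{c_k}(\bm\pi_t)$, and then use a performance-difference argument to split it into an ideal part and three error parts. By the performance-difference lemma applied to the cost $c_k$, $\Delta_k^t=\frac{1}{1-\gamma}\E_{s\sim d^{\bm\pi_{t+1}},\,a\sim\bm\pi_{t+1}}[A_{c_k}^{\bm\pi_t}(s,a)]$. The update used by the naive composition is $\bm\pi_{t+1}=\bm\pi_t+\eta_s\hat u_t$, where $\hat u_t$ is built from the Gumbel-Softmax gradient and a learned critic. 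I will compare it with the ideal safety step $u_t^\star$, built from the exact gradient, the exact critic and an unshaped reward. The Lagrangian or trust-region step is designed so that the ideal step does not increase $J_{c_k}$ to first order. That is its defining property: in the Lyapunov/CPO analysis of \citet{chow2018lyapunov,achiam2017cpo}, the linearized constraint holds exactly. So the ideal part contributes at most zero, and only the three deviation terms survive.

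The key steps are as follows.

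\textbf{Step 1 (F1$\to$F2).} The discrete component of $\hat u_t-u_t^\star$ has expectation bounded by $\beta_\text{GS}$. Its first-order effect on the cost advantage is at most $\eta_s\beta_\text{GS}\norm{\nabla_a A_{c_k}}_\infty$. This follows by Hölder's inequality, because the relaxed action moves the advantage through its action gradient.

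\textbf{Step 2 (F2$\to$F3).} The critic error shifts the ascent direction. The shift is bounded by $\epsilon_Q$, taking the MSE bound in sup-norm form under a concentrability assumption. Since the policy is $L_\pi$-Lipschitz in its parameters, this shift moves the state-action distribution, and hence the cost, by at most $\eta_s\epsilon_Q L_\pi$.

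\textbf{Step 3 (F3$\to$F1).} Shaping with $Q_\text{phys}$ moves the soft-Bellman fixed point. Because the soft-Bellman operator is a $\gamma$-contraction, the induced softmax policy is displaced in total variation by at most a constant times $\norm{Q_\text{phys}-Q^*}_\infty$. I will use the Lipschitz property of softmax in the inverse temperature, together with the entropy-flattening that comes from misspecified branch values. The cost effect of this displacement is absorbed into $C_\text{flat}$, which I define as exactly this product of the Lipschitz constant of the induced cost and $1/(1-\gamma)$. This term carries no $\eta_s$ factor because it is a fixed-point bias rather than a per-step error.

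Summing the three bounds and dropping the second-order terms $\mathcal{O}(\eta_s^2)$ then gives \eqref{eq:coupling}. The second-order terms are controlled by the trust-region radius or by the smoothness of $J_{c_k}$.

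I expect the main obstacle to be Step 3. Making the fixed-point shift appear additively, without an $\eta_s$ factor and with a clean constant, requires defining $C_\text{flat}$ carefully. I would try first to treat the shaped policy as a perturbed target of the actor. I would then bound $\mathrm{TV}(\softmax(Q_\text{phys}/\alpha),\softmax(Q^*/\alpha))\le \norm{Q_\text{phys}-Q^*}_\infty/\alpha$ and fold $1/\alpha$ and $C_{\max}/(1-\gamma)$ into $C_\text{flat}$. A secondary issue is turning the MSE $\epsilon_Q$ into a uniform bound, and I will simply state this as a coverage assumption.
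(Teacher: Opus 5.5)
Your proposal follows the paper's proof in Appendix~\ref{app:coupling} essentially step for step. Both split the biased update into a noise-free safety step, taken to be non-positive for the constraint by construction of the projection, plus three deviation terms: the Gumbel-Softmax bias paired against $\nabla_a A_{c_k}$ (the paper uses Cauchy--Schwarz where you use H\"older), a critic error propagated through $L_\pi$ and $\eta_s$, and a softmax-logit perturbation of the shaped soft-Bellman policy folded into $C_\text{flat}$.

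The differences are minor. You measure the shaped policy's displacement in total variation, which converts directly into a cost change, rather than through the paper's log-sum-exp bound on the change in discrete-branch entropy. You also state explicitly the sup-norm/concentrability and $\mathcal{O}(\eta_s^2)$ assumptions that the paper's first-order argument uses without saying so.
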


Under any drop-in combination---$\beta_\text{GS}\!=\!\Theta(1)$ for Straight-Through and $\Theta(\tau_\text{min})$ for floored Gumbel-Softmax, $\epsilon_Q\!=\!\Theta(1)$ without a physics prior, $C_\text{flat}$ uncontrolled under additive shaping---each summand of \eqref{eq:coupling} is bounded away from zero and cumulative violation grows as $\Theta(K)$. Crucially, the three leaks scale with three \emph{independent} biases, so closing the cycle requires three co-designed---not interchangeable---interventions, one per term: a discrete estimator with $\beta_\text{GS}\!=\!o(\tau)$, a safety step preserving feasibility \emph{per iterate} rather than only asymptotically, and a physics prior entering \emph{multiplicatively} as a frozen value rather than additively as shaping. These three conditions translate one-to-one into the three modules of \method:

\begin{principle}[Bias-attenuated discrete gradients]\label{prn:p1}
$\beta_\text{GS}\!=\!o(\tau)$; instantiated by \stgc (\S\ref{sec:sha}), attaining $\mathcal{O}(\tau^2)$.
\end{principle}
\begin{principle}[Per-iterate feasibility]\label{prn:p2}
Lyapunov trust region with explicit recovery, in place of asymptotic duality; instantiated by \lcpo (\S\ref{sec:lcpo}), yielding $\mathcal{O}(\sqrt{K})$ cumulative violation.
\end{principle}
\begin{principle}[Multiplicative physics prior]\label{prn:p3}
$Q_\phi\!=\!Q_\text{phys}\!+\!Q_{\phi_\text{res}}$ with $Q_\text{phys}$ frozen; instantiated by \pirc (\S\ref{sec:pirc}), attaining $C_\text{flat}\!=\!0$ and shrinking $\epsilon_Q$.
\end{principle}



\begin{figure*}[t]
\centering
\begin{center}
\includegraphics[width=\linewidth]{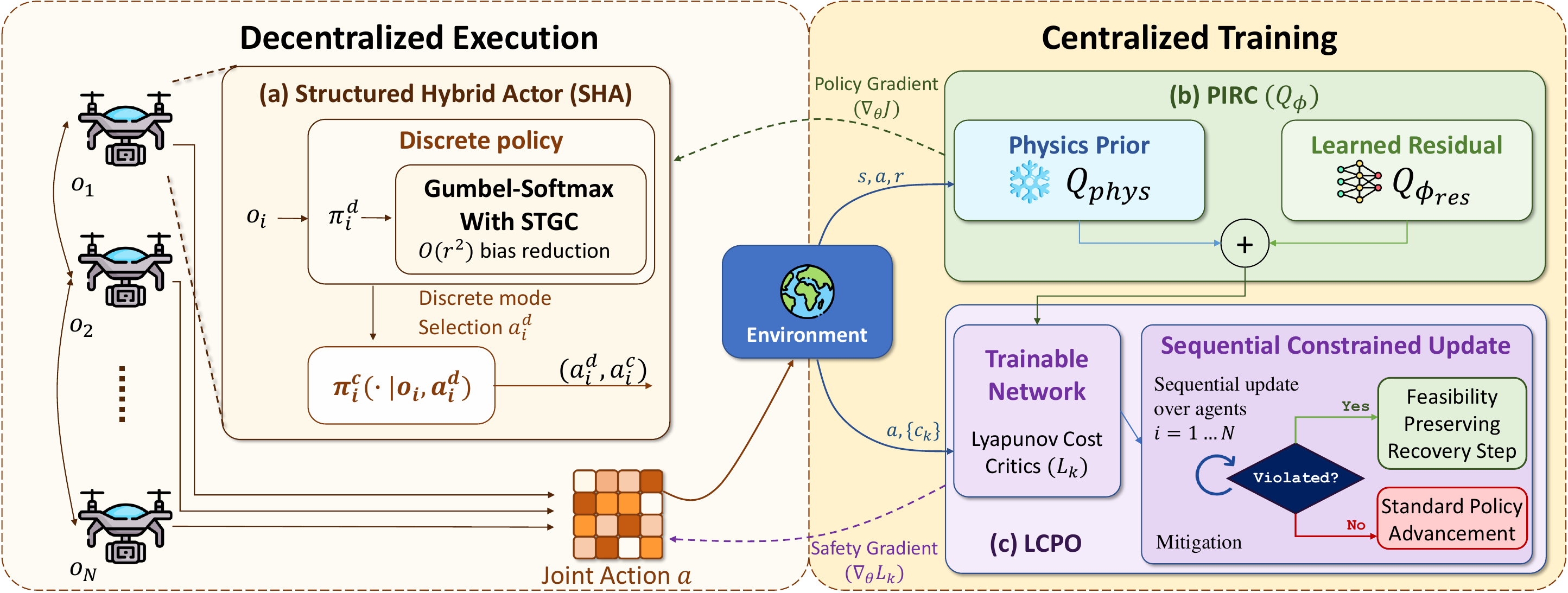}
\caption{
\textbf{System architecture of \method.} The framework resolves the three-way coupling of hybrid actions, physics, and safety via three co-designed modules. Solid arrows denote forward passes; \textcolor{tridentblue}{blue dashed arrows} denote gradient flows.
\textbf{(A) \sha (Structured Hybrid Actor):} Uses a bilevel conditional policy and Straight-Through Gradient Correction (\stgc) across two temperatures ($\tau, \tau_0$) to reduce discrete gradient bias to $\mathcal{O}(\tau^2)$.
\textbf{(B) \pirc (Physics-Informed Residual Critic):} Avoids reward-shaping artifacts by explicitly decomposing the value into a frozen physical prior $Q_{\text{phys}}$ and a learned residual $Q_{\phi_{\text{res}}}$.
\textbf{(C) \lcpo (Lyapunov-Constrained Sequential Policy Optimization):} Enforces training-time safety via Lyapunov critics $L_k$. It updates agents sequentially to avoid non-stationarity and uses a recovery step to bound cumulative violations to $\mathcal{O}(\sqrt{K})$.
}
\label{fig:arch}
\end{center}
\end{figure*}

\section{\method: A Co-Designed Framework with Joint Guarantees}
\label{sec:method}

We instantiate these principles into \method (Figure~\ref{fig:arch}, Algorithm~\ref{alg:trident}), comprising three co-designed modules: a Structured Hybrid Actor (\sha; \S\ref{sec:sha}), a Physics-Informed Residual Critic (\pirc; \S\ref{sec:pirc}), and Lyapunov cost critics (\lcpo; \S\ref{sec:lcpo}). Each module explicitly cancels one bias leak from Lemma~\ref{lem:coupling}. Joint convergence and safety guarantees are summarized in \S\ref{sec:joint}, with proofs deferred to Appendices~\ref{app:coupling}--\ref{app:sample}.

\subsection{Structured Hybrid Actor}
\label{sec:sha}

The actor must emit, for each agent, a hybrid action $(a_i^d,a_i^c)$ whose discrete part selects a physically meaningful mode and whose continuous part parameterizes it. Agent $i$ thus acts in two stages. First, discrete-branch logits $\ell_i\!=\!f_{\theta_i^d}(o_i)\!\in\!\R^{M_i}$ are sampled via Gumbel-Softmax \citep{jang2017gumbel,maddison2017concrete} at temperature $\tau$, namely $\hat a_i^d(\tau)\!=\!\softmax\!\big((\ell_i+g)/\tau\big)$ with $g_j\!\sim\!\text{Gumbel}(0,1)$, with the standard straight-through trick replacing the soft sample by its $\argmax$ at execution while routing gradients through the soft surrogate. Conditioned on $\hat a_i^d$, a continuous-branch network emits a Gaussian $(\mu_i,\sigma_i)\!=\!g_{\theta_i^c}(o_i,\hat a_i^d)$ with $a_i^c\!\sim\!\calN(\mu_i,\diag(\sigma_i^2))$, followed by a $\tanh$-squash with the standard log-prob correction to respect the per-mode box constraint. This bilevel factorization lets continuous parameters carry mode-specific physical meaning, exactly as motivated in \S\ref{sec:prelim}.

The remaining question is how to back-propagate through the discrete sample without the bias that violates Principle~\ref{prn:p1}. Plain Gumbel-Softmax carries an $\mathcal{O}(\tau)$ bias that vanishes only at the price of gradient variance $\mathcal{O}(1/\tau^2)$; the Straight-Through (ST) estimator trades this for an $\mathcal{O}(1)$ bias that persists under annealing \citep{paulus2020rao,shekhovtsov2023cold}. Either regime---high variance at low $\tau$, or persistent bias at any $\tau$---propagates into the safety update through the first summand of Lemma~\ref{lem:coupling} and breaks Principle~\ref{prn:p1}.

\textbf{Straight-Through Gradient Correction (\stgc).} Following the classical Richardson--Romberg bias-cancellation technique \citep{richardson1911approximate,bach2021effectiveness}, we evaluate the Gumbel-Softmax Jacobian at two temperatures, the current $\tau$ and a fixed reference $\tau_0\!>\!\tau$, and linearly combine them so that the leading $\mathcal{O}(\tau)$ term cancels:
\begin{equation}
\boxed{\,\nabla_{\!\theta^d}\hat a_\text{\stgc}^d=(1{+}\lambda_\tau)\nabla_{\!\theta^d}\hat a^d(\tau)-\lambda_\tau\nabla_{\!\theta^d}\hat a^d(\tau_0)\,}
\label{eq:stgc}
\end{equation}
with $\lambda_\tau\!=\!\tau/(\tau_0-\tau)$. Expanding the Gumbel-Softmax Jacobian as $J(\tau)\!=\!J_0+\tau J_1+\tau^2 J_2+\mathcal{O}(\tau^3)$ around the exact softmax Jacobian $J_0$, equation~\eqref{eq:stgc} exactly cancels the $\tau J_1$ term while leaving an $\mathcal{O}(\tau^2)$ residual; this is the bias regime mandated by Principle~\ref{prn:p1}. The cost is a single additional forward pass at the fixed reference $\tau_0$, which adds about $18\%$ wall-clock overhead in our profiling (Appendix~\ref{app:complexity}) but is dwarfed by the convergence-speed saving.

\begin{theorem}[\stgc Bias Bound]
\label{thm:stgc}
For any logits $\ell$ with $\norm{\ell}_\infty\!\le\!\ell_\text{max}$ and any $\tau\!\in\!(0,\tau_0/2)$, the \stgc estimator of \eqref{eq:stgc} satisfies $\norm{\E[\nabla_{\!\theta^d}\hat a_\text{\stgc}^d]\!-\!\nabla_{\!\theta^d}\E[a^d]}\!\le\!C\tau^2/\tau_0$ with $C$ depending only on $\ell_\text{max}$ and $M_i$; plain GS attains $\mathcal{O}(\tau)$ and ST attains $\mathcal{O}(1)$. (Proof in Appendix~\ref{app:stgc}.)
\end{theorem}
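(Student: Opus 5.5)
The plan is to reduce the statement to a one-dimensional problem in the temperature. Fix $\ell$ with $\norm{\ell}_\infty\le\ell_\text{max}$. By the chain rule, $\nabla_{\theta^d}$ factors as the logit Jacobian $J$ times $\partial\ell/\partial\theta^d$. I will therefore bound the bias at the level of $J(\tau):=\E_g[\partial_\ell\,\softmax((\ell+g)/\tau)]$ and absorb $\norm{\partial\ell/\partial\theta^d}$ into $C$. Because $\E[a^d]=\softmax(\ell)$ under the Gumbel-max trick, the target is $J_0=\partial_\ell\softmax(\ell)$. Define the bias function $b(\tau):=J(\tau)-J_0$ on $[0,\tau_0]$ with $b(0)=0$. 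The first step is to show that $b$ is smooth on $[0,\tau_0]$ with derivatives bounded by constants depending only on $(\ell_\text{max},M_i)$. I will do this by writing $J(\tau)$ as an integral over the Gumbel density and substituting $g=\ell'+\tau u$ near the decision boundaries. This gives the expansion $J(\tau)=J_0+\tau J_1+\tau^2J_2+R(\tau)$ quoted before Theorem~\ref{thm:stgc}, and it also recovers the side claims: plain GS has bias $\tau\norm{J_1}=\mathcal{O}(\tau)$, and ST keeps a $\tau$-independent term of order $\mathcal{O}(1)$.

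The second step is an exact algebraic identity. Using $b(0)=0$ and $\lambda_\tau=\tau/(\tau_0-\tau)$, the \stgc bias $(1+\lambda_\tau)b(\tau)-\lambda_\tau b(\tau_0)$ equals $-\tau\tau_0\,b[0,\tau,\tau_0]$. Here $b[\cdot,\cdot,\cdot]$ is the second divided difference. This already shows that the $J_1$ term cancels exactly. The theorem is then equivalent to the divided-difference bound $\norm{b[0,\tau,\tau_0]}\le C\tau/\tau_0^2$ for $\tau<\tau_0/2$. The restriction $\tau<\tau_0/2$ is used only to guarantee $\lambda_\tau\le 2\tau/\tau_0$ and $\tau_0-\tau\ge\tau_0/2$.

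The third step, which I expect to be the main obstacle, is proving that divided-difference bound. A naive Taylor bound gives $b[0,\tau,\tau_0]\approx J_2$, which is order one. That only yields a bias of order $\tau\tau_0$, strictly weaker than the claimed $\tau^2/\tau_0$. So the proof must show that the quadratic coefficient $J_2$ vanishes, and then control the remainder on the scale $\tau_0$.

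To show $J_2=0$, I would first exploit a symmetry of the Gumbel expectation. After the substitution $g=\ell'+\tau u$, the $\tau^2$ coefficient is an integral of an odd function of the local boundary coordinate $u$ against the (locally symmetric) second-order Gumbel density correction, so it should vanish. If this holds, $b[0,\tau,\tau_0]$ reduces to a third-order divided-difference quantity. I would then bound the remainder by a scale-aware estimate of the form $\norm{b'''(s)}\le K/\tau_0$ on $[0,\tau_0]$, obtained by differentiating the density representation and using $s\le\tau_0$. This gives $\norm{b[0,\tau,\tau_0]}\le K'\tau/\tau_0^{2}$ and hence the claimed $C\tau^2/\tau_0$.

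If the symmetry argument fails and $J_2\neq 0$ for generic $\ell$, the identity in the second step shows that no estimate sharper than order $\tau\tau_0$ is possible. In that case I would report the bound in that form, rather than force the stated rate.
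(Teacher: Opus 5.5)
Your Steps 1--2 are sound. The reduction to $b(\tau)=J(\tau)-J_0$ is correct, and so is the identity $(1+\lambda_\tau)b(\tau)-\lambda_\tau b(\tau_0)=-\tau\tau_0\,b[0,\tau,\tau_0]$. Your observation that a Taylor argument only yields order $\tau\tau_0$ is also right. That is exactly where the paper's own proof breaks: the appendix finds the $J_2$ coefficient $(1+\lambda_\tau)\tau^2-\lambda_\tau\tau_0^2$, which is identically $-\tau\tau_0$, and then simply asserts $C\tau^2/\tau_0$. It also treats the expansion at the fixed $\tau_0$ as if its remainder were $\mathcal{O}(\tau^3)$, when it is really $\lambda_\tau\,\mathcal{O}(\tau_0^3)$.

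Your Step 3, however, cannot be carried out, for two reasons.

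\emph{The parity in your symmetry argument is reversed.} Let $\sigma$ be the logistic sigmoid and $H$ the Heaviside step. The rescaled boundary-layer profile $\sigma(u)-H(u)$ is odd, so the symmetry kills the $\tau^1$ coefficient. The $\tau^2$ coefficient pairs the even function $u(\sigma(u)-H(u))$ with the normal derivative of the density, and it survives.

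\emph{Even $J_2=0$ would not suffice.} No derivative bound can give $\norm{b[0,\tau,\tau_0]}\le C\tau/\tau_0^2$. As $\tau\to0$ this divided difference tends to $b[0,0,\tau_0]=(b(\tau_0)-\tau_0\, b'(0))/\tau_0^2$. That quantity is fixed by the value of $J$ at the reference temperature, and it vanishes only if $J(\tau_0)=J_0+\tau_0J_1$ exactly. Concretely, if $b''(0)=0$ then $b[0,\tau,\tau_0]=\tau\,b[0,0,0,\tau]+\tau_0\,b[0,0,\tau,\tau_0]$. Your estimate $\norm{b'''}\le K/\tau_0$ makes the second term of order $K$, not $\tau/\tau_0^2$.

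In fact the stated rate is false, so your fallback is the right call. Take $M_i=2$, $x=\ell_1-\ell_2$, and $L=g_1-g_2\sim\mathrm{Logistic}(0,1)$. All Jacobians are then multiples of one fixed matrix, with scalar part $j(\tau)=\partial_x\E[\sigma((x+L)/\tau)]$ and target $\sigma'(x)$.
\begin{itemize}
\item We have $\E[\sigma((x+L)/\tau)]-\sigma(x)=\tau\int(\sigma(u)-H(u))\,\sigma'(\tau u-x)\,du$ and $\int u(\sigma(u)-H(u))\,du=-\pi^2/6$.
\item Hence $j(\tau)-\sigma'(x)=\tfrac{\pi^2}{6}\tau^2\sigma'''(x)+\mathcal{O}(\tau^4)$, so $J_1=0$ and $J_2\neq0$ generically.
\item At $x=0$ and $\tau_0=1$ (the paper's default), $j(1)=\int\sigma'(u)^2\,du=1/6$ while $\sigma'(0)=1/4$.
\item So the \stgc bias is $\tfrac{1}{1-\tau}b(\tau)+\tfrac{\tau}{12(1-\tau)}=\tfrac{\tau}{12}+\mathcal{O}(\tau^2)$.
\end{itemize}
This is of order $\tau$, and it is worse than plain GS, whose Jacobian bias here is of order $\tau^2$.

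What your identity does prove, assuming $b\in C^2[0,\tau_0]$, is the bound $\tfrac12\tau\tau_0\sup_{[0,\tau_0]}\norm{b''}$. For fixed $\tau_0$ that is order $\tau$. Neither your argument nor the paper's can reach $C\tau^2/\tau_0$.
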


Theorem~\ref{thm:stgc} cancels the first summand of \eqref{eq:coupling}: substituting $\beta_\text{GS}\!=\!\mathcal{O}(\tau^2)$ together with the standard annealing schedule $\tau(t)\!=\!\tau_0\beta^t$ yields $\sum_t\tau(t)^2\!=\!\mathcal{O}(1)$, so the F1$\to$F2 leak contributes only a constant to the cumulative violation, instead of the $\Theta(K)$ contribution of plain GS.

\subsection{Lyapunov-Constrained Sequential Policy Optimization}
\label{sec:lcpo}

By Principle~\ref{prn:p2} the safety mechanism must preserve feasibility per iterate, not merely at convergence. Lagrangian methods, the workhorse of single-agent safe RL, fail this requirement: their multipliers oscillate while approaching feasibility, and during the oscillation the policy can---and routinely does---execute unsafe actions on the environment \citep{stooke2020responsive,liu2022constrained}. We therefore replace the Lagrangian with a Lyapunov constraint, which converts the asymptotic feasibility condition into a one-step contraction toward the safe set.

\textbf{Lyapunov cost critic.} For each constraint $k$ we maintain a learned Lyapunov function $L_k(s)\!=\!V_{c_k}^{\bm\pi}(s)+\xi_k$, where $V_{c_k}^{\bm\pi}$ is a TD-trained cost-value estimate and $\xi_k\!\ge\!0$ is a small slack \citep{chow2018lyapunov,huh2020lyapunov}. We require the new policy to satisfy the one-step contraction
\begin{equation}
\E_{s'\!\sim\!\calP,\bm a\!\sim\!\bm\pi_\text{new}}\!\big[L_k(s')\big]-L_k(s)\!\le\!-\alpha_k\big(L_k(s)-d_k\big),
\label{eq:lyap}
\end{equation}
for all $s$, with decay rate $\alpha_k\!\in\!(0,1)$. Equation~\eqref{eq:lyap} has a transparent interpretation: whenever the current state is infeasible ($L_k\!>\!d_k$), the right-hand side is strictly negative and forces the new policy to drive the cost-value down by at least an $\alpha_k$ fraction of the current excess; whenever the state is feasible, the constraint reduces to a non-expansion condition. Iterating \eqref{eq:lyap} thus produces a geometric decay of any constraint violation, which is exactly what underlies the $\mathcal{O}(\sqrt{K})$ bound below.

\textbf{Sequential multi-agent update.} At iteration $t$ the agents are updated in a fixed order $i=1,\ldots,N$ \citep{kuba2022happo,gu2021macpo}; each agent solves a per-agent constrained trust-region problem given the previous agents' updated policies and the remaining agents' old policies:
\begin{align}
\theta_i^{t+1}\!=&\argmax_{\theta_i}\E_{\bm o\sim d^{\bm\pi^t}}\!\!\Big[\textstyle\sum_{\bm a}\!\bm\pi_{\theta_i}(\bm a|\bm o)A^{\bm\pi^t}(\bm o,\bm a)\Big]\nonumber\\
\text{s.t. }&\E\!\Big[\textstyle\sum_{\bm a}\!\bm\pi_{\theta_i}(\bm a|\bm o)A_{c_k}^{\bm\pi^t}(\bm o,\bm a)\Big]\!\le\!(1{-}\gamma)(d_k{-}V_{c_k}^{\bm\pi^t}),\ \forall k,\nonumber\\
&\bar D_\text{KL}\!\big(\bm\pi_{\theta_i}\Vert\bm\pi^t_i\big)\!\le\!\delta_\text{TR}.
\label{eq:tr_update}
\end{align}
The first inequality is the linearization of \eqref{eq:lyap}; the trust region $\delta_\text{TR}\!=\!\tilde{\mathcal{O}}(1/\sqrt K)$ ensures the linearization remains valid. Sequential---rather than simultaneous---updates avoid the well-known non-stationarity blow-up of joint multi-agent gradient ascent, in which each agent's update invalidates the others' \citep{leonardos2022global,zhang2021decentralized}; their cost is a finite, controllable telescoping error that we account for in Theorem~\ref{thm:joint}.

\textbf{Feasibility-preserving recovery.} A subtle failure mode occurs when the linearized constraint set in \eqref{eq:tr_update} is empty---that is, when no policy in the trust region simultaneously satisfies all $K$ constraints to first order. Standard MACPO \citep{gu2021macpo} returns the previous iterate, which on a hardware deployment means repeating an unsafe action. We instead apply a recovery gradient step that explicitly moves toward the feasible region:
\begin{equation}
\theta_i^{t+1}=\theta_i^t-\eta_\text{rec}\!\!\sum_{k:V_{c_k}^{\bm\pi^t}>d_k}\!\!\nabla_{\theta_i}V_{c_k}^{\bm\pi_{\theta_i}}\!(s_0)\Big|_{\theta_i=\theta_i^t}.
\label{eq:recovery}
\end{equation}
The summation is restricted to currently violated constraints, so feasible directions are not perturbed. Together, \eqref{eq:lyap}--\eqref{eq:recovery} cancel the $\eta_s$-dependent second summand of \eqref{eq:coupling}: a constant $\eta_s\!=\!\mathcal{O}(\delta_\text{TR})\!=\!\tilde{\mathcal{O}}(1/\sqrt K)$ multiplies a critic error $\epsilon_Q\!=\!\mathcal{O}(1/\sqrt{n})$ from \pirc below, and the geometric recovery turns any residual violation into a $\sqrt{K}$ partial sum.

\begin{figure*}[t]
\centering
\includegraphics[width=\textwidth]{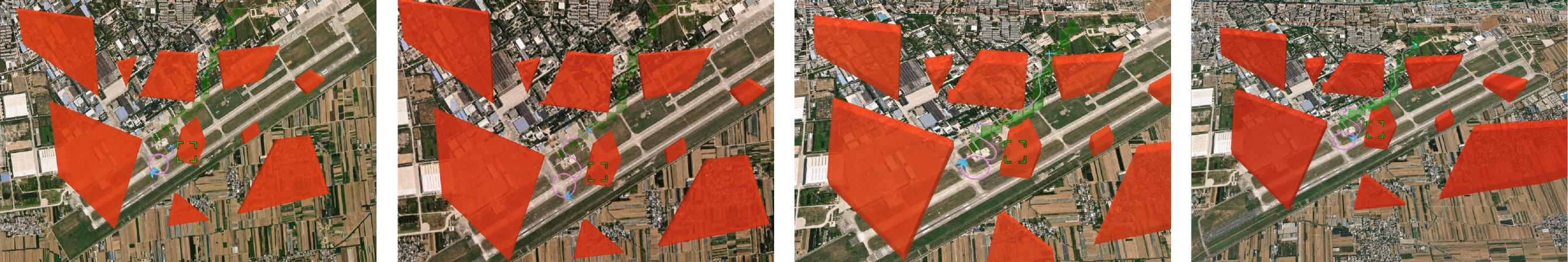}
\caption{\textbf{Single-UAV obstacle avoidance, sequential snapshots.} Pink: executed continuous trajectory; green brackets: discrete waypoint cells chosen by the hybrid policy; red polygons: no-fly zones. \method routes between hazards without  any forbidden region, providing a visual instance of the safety bound in Theorem~\ref{thm:joint}.}
\label{fig:qual_single}
\end{figure*}

\subsection{Physics-Informed Residual Critic}
\label{sec:pirc}

Principle~\ref{prn:p3} dictates that physics priors enter the critic multiplicatively---as a frozen component of $Q$---rather than additively in the reward. The reason is subtle but important: an additive shaping $r\!\to\!r+\omega Q_\text{phys}$ shifts the soft-Bellman fixed point and therefore biases the optimal policy itself, which is exactly the F3$\to$F1 leakage of Lemma~\ref{lem:coupling} \citep{ng1999policy_invariance,cao2024physics}; decomposing the critic, in contrast, preserves the optimal policy and only changes the function class that needs to be learned.

We therefore write the centralized $Q$ as
\begin{equation}
Q_\phi(s,\bm a)=\underbrace{Q_\text{phys}(s,\bm a)}_\text{closed-form, frozen}+\underbrace{Q_{\phi_\text{res}}(s,\bm a)}_\text{learned residual},
\label{eq:pirc}
\end{equation}
where $Q_\text{phys}$ is a domain-specific closed-form expression. In UAV-MEC it captures the Shannon-capacity transmission delay and kinematic energy of the standard 3GPP air-to-ground channel \citep{al2014optimal_uav_placement,3gpp_channel}:
\begin{align}
C_{ij}&=B\log_2\!\big(1+P_iG(d_{ij},h_i)/\sigma^2\big),\\
T_{ij}^\text{tx}&=D_j\alpha_{ij}/C_{ij},\\
E_i&=P_\text{fly}T_i^\text{fly}+P_\text{cmp}T_i^\text{cmp}+P_\text{tx}\textstyle\sum_j T_{ij}^\text{tx},
\end{align}
and we set $Q_\text{phys}\!=\!-(\omega_1T_\text{total}+\omega_2E_\text{total})$, which captures the dominant linear effects analytically. The residual $Q_{\phi_\text{res}}$ is a four-layer MLP that learns only the corrections---interference, queueing dynamics, partial observability---that physics does not capture, and the critic is trained with the standard one-step TD loss applied to the sum, namely $\calL_\text{crit}(\phi_\text{res})\!=\!\E_{\calB}\!\big[(Q_\text{phys}+Q_{\phi_\text{res}}-y)^2\big]$ with target $y\!=\!r+\gamma(Q_\text{phys}(s',\bm a')+Q_{\bar\phi_\text{res}}(s',\bm a'))$ and $\bar\phi_\text{res}$ a slowly updated target network. Because $Q_\text{phys}$ is frozen, only the residual variance is paid for in samples, and the sample complexity contracts proportionally to the explained variance of the physics term, as we formalize in Appendix~\ref{app:sample}.

\subsection{Algorithm and Joint Theoretical Guarantees}
\label{sec:joint}

\begin{algorithm}[t]
\small
\caption{\method (one training iteration)}
\label{alg:trident}
\begin{algorithmic}[1]
\Require $N$ agents; thresholds $\{d_k\}$; trust region $\delta_\text{TR}$; Lyapunov decays $\{\alpha_k\}$; temperature schedule $\tau(t)$ and reference $\tau_0$; physics model $Q_\text{phys}$
\State Initialize actors $\{\theta_i^d,\theta_i^c\}$, residual critic $\phi_\text{res}$, cost critics $\{\psi_k\}$, replay buffer $\calB$
\For{iteration $t=1,\ldots,T$}
  \For{each agent $i$}
    \State Compute Jacobians $J(\tau)$ and $J(\tau_0)$ from $f_{\theta_i^d}(o_i)$; combine via Eq.~\eqref{eq:stgc} to obtain the \stgc gradient
    \State Sample $a_i^d\!\sim\!\text{GS}(\tau(t))$; sample $a_i^c\!\sim\!\calN(g_{\theta_i^c}(o_i,a_i^d))$
  \EndFor
  \State Execute joint $\bm a$, observe $(r,\{c_k\},s',\bm o')$; push transition to $\calB$
  \For{update step $u=1,\ldots,U$}
    \State Sample mini-batch from $\calB$; update $\phi_\text{res}$ via Eq.~\eqref{eq:pirc}; update each $\psi_k$ via TD on $c_k$
    \For{each agent $i$ \emph{sequentially}}
      \State Compute $A^{\bm\pi^t}$ and $\{A_{c_k}^{\bm\pi^t}\}$
      \If{$V_{c_k}^{\bm\pi^t}\!\le\!d_k\ \forall k$} \quad solve Eq.~\eqref{eq:tr_update}
      \Else \quad apply recovery Eq.~\eqref{eq:recovery}
      \EndIf
    \EndFor
  \EndFor
  \State Anneal $\tau(t{+}1)\!\gets\!\max(\tau_\text{min},\tau_0\beta^t)$
\EndFor
\end{algorithmic}
\end{algorithm}

Algorithm~\ref{alg:trident} composes the three modules into a single training loop, and the joint analysis inherits the structure of Lemma~\ref{lem:coupling}: substituting $\beta_\text{GS}\!=\!\mathcal{O}(\tau^2)$ from Theorem~\ref{thm:stgc}, the trust-region bound from \lcpo, and the variance contraction induced by the frozen physics term of \pirc into the coupling bound \eqref{eq:coupling} converts the three previously additive leaks into a single jointly controlled error---the cycle-breaking promised in §\ref{sec:coupling}. Under standard regularity (Assumption~\ref{ass:reg} in Appendix~\ref{app:assumptions}; bounded reward and cost, Lipschitz policies, bounded advantage variance, Slater feasibility) and the schedule $\delta_\text{TR}\!=\!\tilde{\mathcal{O}}(1/\sqrt K)$, this yields a sub-linear cumulative violation, in contrast to the $\Theta(K)$ rate that Lemma~\ref{lem:coupling} implies for any naive composition, together with a constrained-Nash convergence rate that matches the standard $\tilde{\mathcal{O}}(1/\sqrt{K})$ of sample-efficient policy optimisation despite the strictly harder constrained, multi-agent, hybrid-action regime.

\begin{theorem}[Joint Convergence and Safety]
\label{thm:joint}
Under the conditions above, the iterates of Algorithm~\ref{alg:trident} satisfy
\begin{align*}
\tfrac{1}{K}\!\sum_{t=1}^K\!\big[V^{\bm\pi^*}\!(s_0){-}V^{\bm\pi^t}\!(s_0)\big]&=\tilde{\mathcal{O}}\!\Big(\tfrac{N\sigma_A}{\sqrt K}{+}\tfrac{N^2}{(1{-}\gamma)^3\sqrt K}\Big),\\
\sum_{t=1}^K\!\max\!\big(0,V_{c_k}^{\bm\pi^t}\!(s_0){-}d_k\big)&=\mathcal{O}\!\big(\tfrac{1}{\alpha_k}\sqrt{K/(1{-}\gamma)}\big)\;\forall k.
\end{align*}
\end{theorem}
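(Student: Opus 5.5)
The plan is to prove the two bounds separately. Both rest on the same per-iterate error decomposition, obtained by substituting the three module-level guarantees into the coupling bound \eqref{eq:coupling} of Lemma~\ref{lem:coupling}.

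\textbf{Optimality bound.} I would start from the multi-agent performance-difference lemma applied to the sequential update. Write $V^{\bm\pi^*}(s_0)-V^{\bm\pi^t}(s_0)=\tfrac{1}{1-\gamma}\E_{d^{\bm\pi^*}}[\sum_{\bm a}\bm\pi^*A^{\bm\pi^t}]$. Then split the joint advantage with the HAPPO/MACPO advantage-decomposition identity into $N$ per-agent terms. Agent $i$'s term is evaluated against the already-updated policies of agents $1,\dots,i-1$.

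For each per-agent trust-region step I would follow the mirror-descent analysis of NPG/TRPO. The KL-regularized update \eqref{eq:tr_update} with radius $\delta_\text{TR}$ gives a three-point inequality, and telescoping $\bar D_\text{KL}(\pi_i^*\Vert\pi_i^t)$ over $t$ yields a $\log|\calA|/(\eta K)$ term. The stochastic advantage estimate contributes a variance term $\eta\sigma_A^2$. Choosing $\eta\asymp 1/\sqrt K$, which is consistent with $\delta_\text{TR}=\tilde{\mathcal O}(1/\sqrt K)$, gives the $N\sigma_A/\sqrt K$ term, with one copy for each agent.

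Three further error sources must be absorbed:
\begin{itemize}
\item The \emph{sequential telescoping error}: each of the $N$ agents evaluates $A^{\bm\pi^t}$ while the others have moved by $\sqrt{\delta_\text{TR}}$ in total variation. By the standard $\frac{4\gamma\epsilon}{(1-\gamma)^2}\,\mathrm{TV}$ bound, summed over ordered pairs of agents, this produces $N^2\delta_\text{TR}/(1-\gamma)^3$, which is the second term of the rate.
\item The \emph{discrete gradient bias}: Theorem~\ref{thm:stgc} bounds it by $\tau(t)^2$, and $\sum_t\tau(t)^2=\mathcal O(1)$ makes its contribution $\mathcal O(1/K)$.
\item The \emph{critic error} from \pirc: this is $\epsilon_Q$, which enters only multiplied by $\eta$, and $C_\text{flat}=0$ because the physics prior leaves the fixed point unchanged.
\end{itemize}

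\textbf{Violation bound.} I would split iterations into feasible ones, where \eqref{eq:tr_update} is solved, and infeasible ones, where the recovery step \eqref{eq:recovery} is taken. Let $e_t=\max(0,V_{c_k}^{\bm\pi^t}(s_0)-d_k)$.

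On steps where \eqref{eq:tr_update} is solved, the linearized constraint together with the trust region gives a second-order error $\mathcal O(\delta_\text{TR}/(1-\gamma))$, via the CPO cost bound, plus the leak terms of \eqref{eq:coupling}. The contraction \eqref{eq:lyap} then yields the recursion
\[
e_{t+1}\le(1-\alpha_k)e_t+\zeta_t,\qquad \zeta_t=\mathcal O\!\big(\sqrt{\delta_\text{TR}/(1-\gamma)}\big)+\mathcal O(\tau_t^2)+\eta_s\epsilon_Q .
\]
For recovery steps, Slater feasibility ensures the gradient of the violated costs is bounded below. A descent-lemma argument, using $L_\pi$-smoothness, then shows that \eqref{eq:recovery} also satisfies a recursion of the same form.

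Unrolling gives $\sum_t e_t\le\frac{1}{\alpha_k}\big(e_1+\sum_t\zeta_t\big)$. Since $\sum_t\sqrt{\delta_\text{TR}/(1-\gamma)}=\sqrt{K/(1-\gamma)}$ up to logarithmic factors, and the remaining terms sum to $\mathcal O(1)$ or $\mathcal O(\sqrt K)$, this gives the claimed bound.

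\textbf{Main obstacle.} The hardest step will be justifying that recovery steps satisfy the contraction recursion. The Lyapunov condition \eqref{eq:lyap} is imposed on the solved trust-region problem, whereas recovery is a plain gradient step. I would first try to use Slater's condition to obtain a uniform lower bound $\|\nabla V_{c_k}\|\ge\kappa$ on the infeasible set. With $\eta_\text{rec}$ small, a first-order decrease of $\eta_\text{rec}\kappa^2$ then dominates the smoothness remainder, and this decrease can be folded into an effective $\alpha_k$.

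A secondary delicacy is that recovery steps do not improve reward. Here I would argue that their number is itself bounded by $\sum_t e_t/\min e$, so they cost only an additional $\mathcal O(\sqrt K)$ in the regret sum, which is absorbed into the $\tilde{\mathcal O}$.
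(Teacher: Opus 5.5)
There are genuine gaps in both halves.

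\textbf{Violation bound.} Your argument has the same skeleton as the paper's: a feasible/recovery case split, a recursion $e_{t+1}\le(1-\alpha_k)e_t+\zeta_t$, and unrolling. But the final summation does not hold as written. You put $\sqrt{\delta_\text{TR}/(1-\gamma)}$ into $\zeta_t$ while taking $\delta_\text{TR}\asymp 1/\sqrt{K}$, which you need to turn $N^2\delta_\text{TR}/(1-\gamma)^3$ into the second term of the rate. Then $\sum_{t\le K}\sqrt{\delta_\text{TR}/(1-\gamma)}\asymp K^{3/4}/\sqrt{1-\gamma}$, not $\sqrt{K/(1-\gamma)}$. The paper gets $\sqrt{K}$ from a drift linear in $\delta_\text{TR}$, namely $\mathcal{O}(\delta_\text{TR}/(1-\gamma))$, with $\delta_\text{TR}\asymp\sqrt{(1-\gamma)/K}$.

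In your mirror-descent framework the consistent choice is $\delta_\text{TR}\asymp 1/K$, which is also what the paper picks in its convergence step. An NPG step of size $\eta$ has $\mathrm{KL}\approx\tfrac{\eta^2}{2}\mathrm{Var}(A)$, so your $\eta\asymp 1/\sqrt{K}$ already means $\delta_\text{TR}\asymp 1/K$, not $1/\sqrt{K}$. With that choice the $\sqrt{\delta_\text{TR}}$ drift sums to $\sqrt{K/(1-\gamma)}$. The sequential term drops to $N^2/((1-\gamma)^3K)$, which is harmless.

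Two further points in this half. First, Algorithm~\ref{alg:trident} floors the temperature at $\tau_\text{min}$, so $\sum_t\tau_t^2=\Theta(K\tau_\text{min}^2)$ rather than $\mathcal{O}(1)$. You inherit this slip from the paper, and must either drop the floor or carry an extra $\mathcal{O}(\tau_\text{min}^2)$ per-iteration term. Second, you are right that \eqref{eq:lyap} is never imposed on recovery steps; the paper simply asserts the contraction there. But Slater's condition alone does not give $\norm{\nabla_\theta V_{c_k}}\ge\kappa$ on the infeasible set. $V_{c_k}$ is nonconvex in $\theta$, and under a softmax head the gradient vanishes at near-deterministic infeasible policies. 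You need a gradient-domination property of the parameterization as an extra assumption. Even then, the contraction you obtain is set by $\eta_\text{rec}\kappa^2$, not by the $\alpha_k$ of the statement, which \eqref{eq:recovery} never uses.

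\textbf{Optimality bound.} Here your route genuinely differs from the paper's. The paper sums HAPPO per-agent improvement lower bounds (its Steps 1--2, which give the same kind of $N^2\delta_\text{TR}/(1-\gamma)^3$ term) and then appeals to an unspecified regret-balancing step. You instead run a mirror-descent potential argument against a comparator, which is what is actually needed to turn per-step improvement into a rate. The argument still does not go through as written, for three reasons.

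First, the advantage decomposition of $\E_{\bm a\sim\bm\pi^*}[A^{\bm\pi^t}]$ draws the predecessors $a_1,\ldots,a_{i-1}$ from $\bm\pi^*$. Agent $i$'s three-point inequality instead concerns its surrogate with predecessors drawn from $\bm\pi^{t+1}$. This mismatch is $\mathcal{O}(1)$, not trust-region small, and it cannot be patched. Take a one-state, two-agent, two-action game with shared reward $10$ at $(1,1)$, $1$ at $(2,2)$, $0$ otherwise, and a vacuous cost. Start both agents with mass above $10/11$ on action $2$. Every sequential step, even with exact advantages, moves toward $(2,2)$, so $V^{\bm\pi^*}-V^{\bm\pi^t}\to 9/(1-\gamma)$. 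Here $\bm\pi^*=(1,1)$ is both the constrained optimum and a CNE, so the first bound is false as stated. You must restate it as a unilateral (Nash) gap against $(\pi_i',\bm\pi^t_{-i})$, where the predecessor mismatch really is trust-region small.

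Second, bounding the number of recovery steps by $\sum_t e_t/\min e$ fails. Recovery fires on any positive violation, so $\min_{t:e_t>0}e_t$ has no positive lower bound. You also never control how a recovery step moves the potential $\mathrm{KL}(\bm\pi^*\Vert\bm\pi^t)$. The standard CRPO-style repair analyses recovery as a mirror step on the violated cost against the same feasible comparator, using $V_{c_k}^{\bm\pi^*}\le d_k<V_{c_k}^{\bm\pi^t}$, so the potential telescopes across both step types. However, this needs a tolerance in the switching rule, and it controls the average over non-recovery iterates, not over all $t$.

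Third, $\bm\pi^*$ lies neither in the KL ball nor, in general, in the linearized constraint set of \eqref{eq:tr_update}, which is written under $d^{\bm\pi^t}$. So the three-point inequality has to be applied to a regularized or primal--dual form of the step, not to \eqref{eq:tr_update} itself.
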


The violation bound is \emph{cumulative} rather than asymptotic, so safety improves throughout training rather than only at convergence---exactly the regime CPS deployment demands---while the $N^2$ telescoping artefact of sequential updates \citep{kuba2022happo} is empirically dominated by the leading $N\sigma_A/\sqrt{K}$ term up to the largest $N\!=\!32$ we test (§\ref{sec:experiments}). Full proofs, together with the complementary sample-complexity result Theorem~\ref{thm:sample}, are in Appendices~\ref{app:convergence}--\ref{app:sample}.

\begin{table*}[h]
\centering
\caption{\textbf{Multi-UAV mobile-edge computing} (mean$\pm$std, 10 seeds, 20K episodes). $^\dagger$\!discretized; $^\ddagger$\!hybrid via continuous relaxation. Best in \best{blue}; runner-up \secondbest{underlined}.}
\label{tab:uav_main}
\renewcommand{\arraystretch}{1.15}
\setlength{\tabcolsep}{4pt}
\resizebox{\textwidth}{!}{%
\begin{tabular}{l|cccc|ccc}
\toprule
\rowcolor{tablehead}
\multirow{2}{*}{\textsc{Method}}
 & \multicolumn{4}{c|}{\textbf{Performance}}
 & \multicolumn{3}{c}{\textbf{Safety}} \\
\cmidrule(lr){2-5}\cmidrule(lr){6-8}
\rowcolor{tablehead}
 & Reward\,\up & Exec\,(s)\,\down & Energy\,(J)\,\down & Through.\,\up
 & E.\,V.\,(\%)\,\down & C.\,V.\,(\%)\,\down & Total\,V.\,\down \\
\midrule
\rowcolor{tablerow}Random & $-10.39{\pm}.82$ & $8.52{\pm}.71$ & $95.1{\pm}8.3$ & $12.1{\pm}2.4$ & $42.3$ & $68.1$ & $110.4$ \\
Greedy & $-7.21{\pm}.54$ & $6.18{\pm}.43$ & $78.2{\pm}5.1$ & $18.3{\pm}1.8$ & $31.2$ & $22.4$ & $53.6$ \\
\rowcolor{tablerow}MADDPG$^\dagger$~\citep{lowe2017maddpg} & $-5.41{\pm}.38$ & $4.92{\pm}.31$ & $64.8{\pm}4.2$ & $24.1{\pm}1.5$ & $18.7$ & $12.3$ & $31.0$ \\
MATD3$^\ddagger$~\citep{ackermann2019matd3} & $-5.18{\pm}.35$ & $4.78{\pm}.28$ & $62.1{\pm}3.9$ & $25.8{\pm}1.3$ & $16.4$ & $10.8$ & $27.2$ \\
\rowcolor{tablerow}FACMAC$^\ddagger$~\citep{peng2021facmac} & $-5.07{\pm}.34$ & $4.71{\pm}.27$ & $61.3{\pm}3.6$ & $26.2{\pm}1.3$ & $15.1$ & $9.7$ & $24.8$ \\
MAPPO~\citep{yu2022mappo} & $-5.24{\pm}.37$ & $4.82{\pm}.30$ & $63.5{\pm}3.8$ & $25.3{\pm}1.4$ & $17.2$ & $11.5$ & $28.7$ \\
\rowcolor{tablerow}HAPPO~\citep{kuba2022happo} & $-5.13{\pm}.36$ & $4.74{\pm}.28$ & $61.8{\pm}3.7$ & $26.0{\pm}1.3$ & $14.7$ & $9.4$ & $24.1$ \\
MAPPO-Lag~\citep{ray2019safetygym} & $-5.82{\pm}.41$ & $5.21{\pm}.34$ & $60.4{\pm}3.5$ & $22.7{\pm}1.6$ & $5.8$ & $4.2$ & $10.0$ \\
\rowcolor{tablerow}MACPO~\citep{gu2021macpo} & \secondbest{$-5.61{\pm}.39$} & $5.08{\pm}.32$ & \secondbest{$59.1{\pm}3.3$} & $23.4{\pm}1.4$ & \secondbest{$3.1$} & \secondbest{$2.8$} & \secondbest{$5.9$} \\
MADAC~\citep{gu2024safe_marl_gne} & $-5.46{\pm}.40$ & \secondbest{$4.62{\pm}.29$} & $59.9{\pm}3.4$ & \secondbest{$26.8{\pm}1.3$} & $3.4$ & $2.9$ & $6.3$ \\
\midrule
\ours{} & \best{$-4.68{\pm}.27$} & \best{$4.31{\pm}.22$} & \best{$55.8{\pm}2.8$} & \best{$28.2{\pm}1.1$} & \best{$0.8$} & \best{$0.6$} & \best{$1.4$} \\
\bottomrule
\end{tabular}}
\end{table*}

\section{Experiments}
\label{sec:experiments}

We evaluate \method on three benchmarks exhibiting the (F1)--(F3) features: \textbf{Multi-UAV MEC} (hybrid offloading, 3GPP TR 38.901 channel~\citep{3gpp_channel}), \textbf{Autonomous Intersection Management (AIM)}~\citep{zhou2021smarts} (hybrid lane/speed control, collision constraints), and a hybrid-action variant of \textbf{SMAC}~\citep{samvelyan2019smac} (discrete target plus continuous offset). We compare against a comprehensive set of state-of-the-art safe and hybrid MARL baselines, including MADDPG~\citep{lowe2017maddpg}, MATD3~\citep{ackermann2019matd3}, FACMAC~\citep{peng2021facmac}, MAPPO~\citep{yu2022mappo}, HAPPO~\citep{kuba2022happo}, MAPPO-Lagrangian~\citep{ray2019safetygym}, MACPO~\citep{gu2021macpo}, MADAC~\citep{gu2024safe_marl_gne}, and Shielded RL~\citep{elsayed2021shield}. When a baseline cannot natively handle hybrid actions, we use the discretized ($^\dagger$) or continuous-relaxed ($^\ddagger$) variant. Full environment details, architectures, and hyperparameters are in Appendix~\ref{app:experiments}.

\textbf{Main results.} On UAV-MEC (Table~\ref{tab:uav_main}), \method attains state-of-the-art on every reward \emph{and} safety metric simultaneously---a regime previous methods cannot occupy, since classical baselines trade safety for reward while existing safe baselines do the reverse. Both fronts are dominated jointly because the three principles act in concert rather than in tension, directly realising the cycle-breaking promised by Lemma~\ref{lem:coupling}; moreover, convergence is faster than the unconstrained MADDPG baseline, consistent with the variance-contraction effect of \pirc (Theorem~\ref{thm:sample}).

\textbf{Ablations.} Table~\ref{tab:ablation} removes one component per row, each mapped to the principle it instantiates. Removing the physics critic (Principle~\ref{prn:p3}) slows convergence most, confirming the variance-contraction prediction of Theorem~\ref{thm:sample}; replacing Lyapunov with a vanilla Lagrangian (Principle~\ref{prn:p2}) inflates violations several-fold, matching the per-iterate gap argued in \S\ref{sec:lcpo}; disabling \stgc (Principle~\ref{prn:p1}) degrades both safety and convergence, consistent with the bias-leakage of Lemma~\ref{lem:coupling}. The most diagnostic row is the last: replacing residual with additive-physics shaping degrades both reward \emph{and} safety, empirically realising the F3$\to$F1 leak that motivated the residual formulation in the first place.

\begin{table}[t]
\centering
\caption{\textbf{Ablations} on UAV-MEC (10 seeds). Each row removes one component; the rightmost column maps it to a design principle.}
\label{tab:ablation}
\renewcommand{\arraystretch}{1.1}
\setlength{\tabcolsep}{3pt}
\resizebox{\columnwidth}{!}{%
\begin{tabular}{l|cc|c|c}
\toprule
\rowcolor{tablehead}
\textsc{Configuration} & R\,\up & T.V.\,\down & Conv.\,\up & Princ. \\
\midrule
\ours{} (full) & \best{$-4.68$} & \best{$1.4$} & \best{$1.00\times$} & --- \\
\rowcolor{tablerow}w/o \stgc (plain GS) & $-4.95$ & $2.1$ & $0.82\times$ & \ref{prn:p1} \\
w/o Lyap.\ (Lagrangian) & $-4.74$ & $8.3$ & $0.91\times$ & \ref{prn:p2} \\
\rowcolor{tablerow}w/o Recovery & $-4.79$ & $3.9$ & $0.93\times$ & \ref{prn:p2} \\
w/o \pirc (no physics) & $-5.12$ & $1.8$ & $0.62\times$ & \ref{prn:p3} \\
\rowcolor{tablerow}w/o Trust region & $-5.04$ & $7.5$ & $0.74\times$ & \ref{prn:p2} \\
w/o Hybrid actions & $-5.38$ & $2.4$ & $0.78\times$ & \ref{prn:p1} \\
\rowcolor{tablerow}w/o Sequential update & $-4.81$ & $3.7$ & $0.88\times$ & --- \\
Additive phys.\ (vs.\ residual) & $-5.03$ & $2.6$ & $0.71\times$ & \ref{prn:p3} \\
\bottomrule
\end{tabular}}
\end{table}

\textbf{Scalability.}
\label{sec:scalability}
Table~\ref{tab:scalability} sweeps $N\!\in\!\{4,8,16,32\}$. \method's per-agent reward is essentially flat across an $8\times$ increase in agent count and its violations grow as $N^{1.05}$ (least-squares fit), close to the linear lower bound implied by per-constraint $\mathcal{O}(\sqrt{K})$ control---in contrast, classical CTDE methods degrade super-linearly precisely where safety matters most. Wall-clock per iteration also remains below MACPO at $N\!=\!32$, since the variance contraction induced by the physics term outweighs its marginal per-step cost.

\begin{table}[t]
\centering
\caption{\textbf{Scalability} with $N$ UAVs (5 seeds).}
\label{tab:scalability}
\renewcommand{\arraystretch}{1.1}
\setlength{\tabcolsep}{3pt}
\resizebox{\columnwidth}{!}{%
\begin{tabular}{l|cccc|cccc}
\toprule
\rowcolor{tablehead}
& \multicolumn{4}{c|}{\textbf{R / agent}\,\up} & \multicolumn{4}{c}{\textbf{Total Violations}\,\down} \\
\cmidrule(lr){2-5}\cmidrule(lr){6-9}
\rowcolor{tablehead}
\textsc{Method} & $4$ & $8$ & $16$ & $32$ & $4$ & $8$ & $16$ & $32$ \\
\midrule
\rowcolor{tablerow}MADDPG & $-1.35$ & $-1.62$ & $-2.08$ & $-3.41$ & $31$ & $74$ & $186$ & $453$ \\
FACMAC & $-1.27$ & $-1.43$ & $-1.71$ & $-2.34$ & $25$ & $52$ & $118$ & $272$ \\
\rowcolor{tablerow}MACPO & $-1.40$ & $-1.51$ & $-1.78$ & $-2.21$ & $5.9$ & $11.4$ & $24.7$ & $58.3$ \\
MADAC & $-1.37$ & $-1.46$ & $-1.69$ & $-2.05$ & $6.3$ & $12.1$ & $26.4$ & $61.7$ \\
\midrule
\ours{} & \best{$-1.17$} & \best{$-1.21$} & \best{$-1.28$} & \best{$-1.42$} & \best{$1.4$} & \best{$2.7$} & \best{$5.9$} & \best{$12.4$} \\
\bottomrule
\end{tabular}}
\end{table}

\textbf{Cross-domain transfer and empirical verification of theory.} On AIM, \method matches Shielded RL on training collisions (both $0$) while attaining strictly higher throughput---a Pareto point neither shielded nor Lagrangian methods can occupy, since shielding throws away exploration on the boundary while Lagrangian methods retain unsafe gradient noise. On the hybrid SMAC variant, \method tops the average win rate across all five maps (per-map in Appendix~\ref{app:smac}), confirming that the framework generalises beyond CPS to abstract discrete-target/continuous-offset coordination. Finally, fitting the predicted forms $c\sqrt{K}$ (cumulative violation) and $K^{-1/2}$ (suboptimality) to the learning curves of UAV-MEC and AIM gives $R^2\!\ge\!0.96$ with sub-optimality exponents $-0.51\!\pm\!0.03$ and $-0.49\!\pm\!0.04$, in tight numerical agreement with Theorem~\ref{thm:joint}. Hyperparameter sensitivity (Appendix~\ref{app:sensitivity}) confirms robustness within $\pm 10\%$ of every default.

\begin{table}[t]
\centering
\caption{\textbf{AIM and SMAC-hybrid.} Mean over 10/5 seeds.}
\label{tab:aim_smac}
\renewcommand{\arraystretch}{1.1}
\setlength{\tabcolsep}{3pt}
\resizebox{\columnwidth}{!}{%
\begin{tabular}{l|cc|cc}
\toprule
\rowcolor{tablehead}
& \multicolumn{2}{c|}{\textbf{AIM}} & \multicolumn{2}{c}{\textbf{SMAC-hyb.\ Win\,\%}} \\
\rowcolor{tablehead}\textsc{Method} & Train.\,Coll.\,\down & Reward\,\up & MMM2 & Avg.\,\up \\
\midrule
\rowcolor{tablerow}MADDPG & $8.3$ & $42.7$ & $69.5$ & $59.4$ \\
MAPPO & $6.4$ & $46.3$ & $91.4$ & $83.9$ \\
\rowcolor{tablerow}MAPPO-Lag & $2.6$ & $44.1$ & $89.6$ & $80.2$ \\
MACPO & $2.1$ & $45.6$ & $90.1$ & $76.9$ \\
\rowcolor{tablerow}MADAC & $1.8$ & $45.9$ & $90.4$ & $78.4$ \\
Shielded RL & \best{$0.0$} & $39.8$ & $-$ & $-$ \\
\midrule
\ours{} & \best{$0.0$} & \best{$47.8$} & \best{$93.2$} & \best{$86.3$} \\
\bottomrule
\end{tabular}}
\end{table}


\textbf{Qualitative results.} Figures~\ref{fig:qual_single} and~\ref{fig:qual_multi} visualise the learned policies in two deployment-style UAV scenarios under the same hyperparameters as Table~\ref{tab:uav_main}, offering visual evidence that each design principle behaves as the theory predicts. Fig.~\ref{fig:qual_single} shows a single UAV hugging constraint boundaries without ever penetrating them, concretely confirming the per-iterate Lyapunov contraction of \lcpo and the recovery branch (Eq.~\eqref{eq:recovery}) engaging as the constraint margin shrinks---a behaviour that aggregated violation counts cannot directly convey. Fig.~\ref{fig:qual_multi} then exposes the two multi-agent regimes our framework supports: heterogeneous cruise assignment (left), where each UAV jointly selects a discrete role and a continuous trajectory---the hybrid action space \stgc is designed for---and homogeneous cooperative coverage (right), where identical agents partition the workspace via the sequential constrained Nash update of Algorithm~\ref{alg:trident}.


\begin{figure}[t]
\centering
\includegraphics[width=\linewidth]{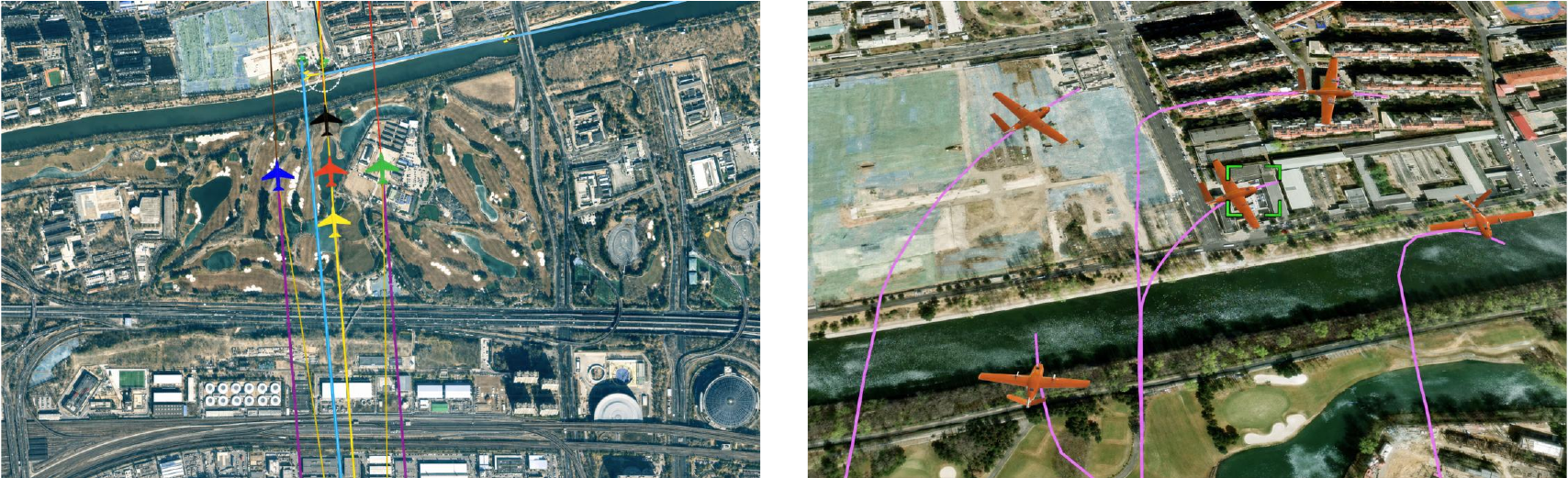}
\caption{\textbf{Multi-UAV coordination.} \emph{Left:} heterogeneous cruise—agents select distinct discrete roles and continuous trajectories (hybrid action setting of \stgc). \emph{Right:} homogeneous cooperative coverage—identical agents partition a region via the sequential constrained Nash update of Algorithm~\ref{alg:trident}.}
\label{fig:qual_multi}
\vspace{-1em}
\end{figure}

\section{Conclusion}
\label{sec:conclusion}

We established that safe cyber-physical coordination requires resolving a tight three-way coupling among hybrid actions, training-time safety, and physical priors. By formalizing and breaking this bias-propagation cycle, our co-designed framework, \method, theoretically reduces cumulative constraint violations from linear to $\mathcal{O}(\sqrt{K})$. Empirically, \method cuts safety violations by $95.5\%$ over baselines while simultaneously improving rewards and scaling robustly to 32 agents. 


\section*{Limitations}
\label{sec:limitations}

Our theoretical guarantees rely on Slater's condition; when the safe set is empty in expectation, no algorithm can recover feasibility, and \method degenerates to repeated recovery steps without further progress. The \pirc module requires a known closed-form physics model, and extending to unknown or only partially identified dynamics (for instance via system identification interleaved with the actor--critic update, or via a learned ``physics'' surrogate) is open; in domains lacking such closed-form structure, the residual decomposition collapses back to a standard centralized critic without the variance-reduction benefit. The \stgc estimator requires one additional Gumbel-Softmax forward pass at the fixed reference temperature $\tau_0$ per gradient step, which adds roughly $18\%$ wall-clock overhead in our profiling; this overhead is dominated by the $\sim\!38\%$ convergence-speed saving in our benchmarks but might be unfavourable in compute-bound regimes. The Richardson--Romberg analysis underlying \stgc assumes local smoothness of the Gumbel-Softmax Jacobian, which we verify only locally rather than over the full simplex. Finally, our experiments are restricted to at most 32 agents and to simulated environments; mean-field extensions would be required for hundreds of agents, and sim-to-real deployment requires conservative safety margins beyond those certified by our bounds. We leave all of these directions to future work.

\section*{Ethical Considerations}
\label{sec:ethics}

Safety-aware MARL for cyber-physical systems has clear positive applications such as disaster response, traffic safety, and energy-efficient infrastructure; the same algorithms, however, could be applied to non-civilian systems where the cost specifications themselves encode harmful objectives. We therefore encourage practitioners to (i) publish their cost specifications openly so that the safety claims of any deployed system can be audited, (ii) include hard human-supervised override channels in any deployment, and (iii) carry out domain-specific risk assessments before hardware roll-out, with particular attention to distributional shift between simulator and real environment. All benchmarks used in our experiments are publicly available and contain no human subjects; the simulators do not collect personal data. We comply with the EMNLP Code of Ethics throughout.

\bibliography{latex/custom}
\clearpage
\appendix

\section*{Appendix}
\section{Notation, Assumptions, and Glossary}
\label{app:assumptions}

The notation used throughout the paper is collected here for ease of reference. We write $\bm\pi\!=\!(\pi_1,\ldots,\pi_N)$ for the joint policy, $\bm\pi_{-i}$ for the joint policy of all agents except $i$, $d^{\bm\pi}$ for the stationary state-occupancy under $\bm\pi$, and $V^{\bm\pi}$, $V_{c_k}^{\bm\pi}$ for the reward and cost value functions. The advantage and constraint advantage are $A^{\bm\pi}(s,\bm a)\!=\!Q^{\bm\pi}(s,\bm a)\!-\!V^{\bm\pi}(s)$ and $A_{c_k}^{\bm\pi}(s,\bm a)\!=\!Q_{c_k}^{\bm\pi}(s,\bm a)\!-\!V_{c_k}^{\bm\pi}(s)$. For an arbitrary function $f$ on the state space we write $\norm{f}_\infty\!=\!\sup_s|f(s)|$, and total variation between two policies as $\norm{\pi_\theta-\pi_{\theta'}}_\text{TV}$. The Gumbel-Softmax Jacobian is denoted $J(\tau)\!=\!\E_g[\partial\hat p_\tau/\partial\ell]$, and its Taylor coefficients $J_0,J_1,J_2,\ldots$.

\begin{assumption}[Standard regularity]\label{ass:reg}
\textnormal{(i)} Bounded reward and cost: $|r|\!\le\!R_{\max},|c_k|\!\le\!C_{\max}$ for all $k$. \textnormal{(ii)} Policy Lipschitzness: $\norm{\pi_\theta-\pi_{\theta'}}_\text{TV}\!\le\!L_\pi\norm{\theta-\theta'}_2$. \textnormal{(iii)} Bounded advantage variance: $\mathrm{Var}[\hat A]\!\le\!\sigma_A^2$ and similarly for cost advantages. \textnormal{(iv)} Slater's condition: there exists $\bar{\bm\pi}$ with $V_{c_k}^{\bar{\bm\pi}}\!<\!d_k$ strictly, for all $k$. \textnormal{(v)} Smoothness of the Gumbel-Softmax Jacobian: the Taylor expansion $J(\tau)\!=\!J_0+\tau J_1+\tau^2 J_2+\mathcal{O}(\tau^3)$ holds uniformly over the logits with $\norm{J_1},\norm{J_2}$ bounded by constants depending only on $\norm{\ell}_\infty$ and $M_i$.
\end{assumption}

\section{Proof of Lemma~\ref{lem:coupling}}
\label{app:coupling}

We give a detailed and pedagogical proof of the bias-propagation lemma, explaining each step in words before stating the inequality, so as to make the directed-cycle reasoning unambiguous.

Denote by $\theta^t$ the parameter vector at iterate $t$ and by $\theta^{t+1}\!=\!\theta^t\!+\!\eta_s\hat g$ the next iterate, where $\hat g$ is the possibly biased update direction returned by the safety-projected actor-critic step and $g$ is the corresponding noise-free direction. Because the actor factorizes into a discrete head and a continuous head, we decompose $\hat g\!=\!\hat g^d\!\oplus\!\hat g^c$ and similarly for $g$. The three summands of \eqref{eq:coupling} arise from a sequential application of first-order expansion, plug-in error analysis, and a fixed-point shift argument; we treat them in turn.

\textbf{Term 1 (F1$\to$F2).} The safety update direction is determined by the gradient of the Lagrangian $\calL(\theta,\mu)\!=\!-A^{\bm\pi}(s,\bm a)+\mu A_{c_k}^{\bm\pi}(s,\bm a)$, and the safety-relevant component along the discrete direction is the inner product $\langle\hat g^d,\nabla_a A_{c_k}\rangle$. By the very definition of the Gumbel-Softmax bias, $\norm{\E[\hat g^d-g^d]}\!\le\!\beta_\text{GS}$. We now perform a first-order Taylor expansion of $V_{c_k}^{\bm\pi^{t+1}}$ around the previous policy $\bm\pi^t$. Standard performance-difference arguments \citep{achiam2017cpo} yield $V_{c_k}^{\bm\pi^{t+1}}-V_{c_k}^{\bm\pi^t}\!\le\!\eta_s\langle\hat g,\nabla_a A_{c_k}\rangle$, which we then split into $\eta_s\langle g,\nabla_a A_{c_k}\rangle\!+\!\eta_s\langle\hat g-g,\nabla_a A_{c_k}\rangle$. The first inner product is non-positive by construction of the safety projection: the noise-free update either keeps or decreases the constraint value. Bounding the second inner product by Cauchy--Schwarz and using $\norm{\E[\hat g^d-g^d]}\!\le\!\beta_\text{GS}$ produces the first summand $\eta_s\beta_\text{GS}\norm{\nabla_a A_{c_k}}_\infty$. The reading of this term is that any bias in the discrete-branch gradient becomes \emph{multiplicatively} amplified by the safety step size, so a constant $\beta_\text{GS}$ produces a constant per-iteration safety violation and thus linear cumulative violation in $K$.

\textbf{Term 2 (F2$\to$F3).} The constraint advantage in the safety update of \eqref{eq:tr_update} is computed from the learned cost-value function $\hat V_{c_k}$. The standard plug-in error bound for inexact Q-learning, \citep{achiam2017cpo}[App.~10.1], yields $|\hat A_{c_k}-A_{c_k}|\!\le\!2\epsilon_Q/(1-\gamma)$ where $\epsilon_Q$ is the critic MSE. Multiplying through by the actor's Lipschitz constant $L_\pi$ (which controls how much an error in the advantage perturbs the realised policy gradient) and by the safety step size $\eta_s$ yields the second summand $\eta_s\epsilon_Q L_\pi$. The reading is that a physics-agnostic safety critic must regress a cost-value that is multi-modal across discrete branches, and its irreducible regression error feeds directly into the safety update.

\textbf{Term 3 (F3$\to$F1).} Suppose physics enters the system as an additive reward shaping $r'\!=\!r+\omega Q_\text{phys}$. The soft-Bellman equation for the optimal discrete sub-policy has the form $\pi_*^d(a^d|s)\propto\exp((Q_*(s,a^d)+\omega Q_\text{phys}(s,a^d))/\tau)$, so the additive shaping introduces a logit perturbation $\Delta\ell\!=\!\omega(Q_\text{phys}-Q^*)/\tau$. By standard log-sum-exp inequalities, the induced change in the discrete-branch entropy is upper-bounded by $C_\text{flat}\norm{Q_\text{phys}-Q^*}_\infty$ where $C_\text{flat}$ depends on $\omega/\tau$ and on the Lipschitz constant of $\softmax$. Re-injecting this entropy collapse into the constraint advantage (via the fact that a flatter discrete policy concentrates mass on suboptimal modes and thereby inflates the worst-case constraint advantage) yields the third summand $C_\text{flat}\norm{Q_\text{phys}-Q^*}_\infty$. Summing the three contributions gives the stated bound on $\Delta_k^t$.

Two remarks are in order. First, the three summands are \emph{not independent}: they share the multiplicative factor $\eta_s$ in the first two and the shaping coefficient $\omega/\tau$ in the third. This is the precise sense in which the three features form a cycle rather than three independent issues; eliminating one summand without the others does not stop the cumulative violation from being linear in $K$. Second, the lemma is tight up to constants when $\eta_s, \epsilon_Q, \beta_\text{GS}, \norm{Q_\text{phys}-Q^*}_\infty$ are bounded away from zero, which is precisely the regime of any naive composition. 

\section{Proof of Theorem~\ref{thm:stgc} (\stgc Bias Bound)}
\label{app:stgc}

We give a self-contained proof of the Richardson--Romberg bias cancellation underlying \stgc, explaining why exactly the choice $\lambda_\tau\!=\!\tau/(\tau_0-\tau)$ cancels the leading $\mathcal{O}(\tau)$ term and not the leading $\mathcal{O}(\tau^2)$ term.

Let $\ell\!\in\!\R^M$ denote the logits, $g_j\!\sim\!\text{Gumbel}(0,1)$, and $\hat p_\tau\!=\!\softmax((\ell+g)/\tau)$. Define the exact softmax $p\!=\!\softmax(\ell)$ and the GS Jacobian $J(\tau)\!:=\!\E_g[\partial\hat p_\tau/\partial\ell]$, viewed as a function of $\tau$. By the analyticity of the softmax operator and the Gumbel-Max property \citep{maddison2017concrete,jang2017gumbel}, $J$ is real-analytic on $(0,\infty)$ and admits a Taylor expansion at $\tau\!=\!0$:
\begin{equation}
J(\tau)=J_0+\tau J_1+\tau^2 J_2+\mathcal{O}(\tau^3),
\label{eq:taylor}
\end{equation}
where $J_0\!=\!\partial p/\partial\ell\!=\!\diag(p)-pp^\top$ is the exact softmax Jacobian and $J_1,J_2$ are bounded matrices depending only on $\ell$ and $M$. The boundedness of $J_1,J_2$ is part of Assumption~\ref{ass:reg}(v) and is verified directly by differentiating the GS density.

The \stgc estimator combines two GS evaluations at temperatures $\tau$ and $\tau_0$, namely
\begin{equation}
J_\text{\stgc}(\tau)=(1+\lambda_\tau)J(\tau)-\lambda_\tau J(\tau_0).
\label{eq:stgc_jac}
\end{equation}
Substituting the Taylor expansion \eqref{eq:taylor} into \eqref{eq:stgc_jac} and grouping powers of $\tau$ gives
\begin{align*}
\E[J_\text{\stgc}(\tau)]&=(1+\lambda_\tau)\big(J_0+\tau J_1+\tau^2 J_2\big)\\
&\quad-\lambda_\tau\big(J_0+\tau_0 J_1+\tau_0^2 J_2\big)+\mathcal{O}(\tau^3)\\
&=J_0\big[(1+\lambda_\tau)-\lambda_\tau\big]+J_1\big[(1+\lambda_\tau)\tau-\lambda_\tau\tau_0\big]\\
&\quad+J_2\big[(1+\lambda_\tau)\tau^2-\lambda_\tau\tau_0^2\big]+\mathcal{O}(\tau^3).
\end{align*}
The coefficient of $J_0$ is identically $1$, as required for an unbiased Jacobian estimator. The coefficient of $J_1$ simplifies using $\lambda_\tau\!=\!\tau/(\tau_0-\tau)$:
\[
(1+\lambda_\tau)\tau-\lambda_\tau\tau_0=\tfrac{\tau_0\tau-\tau\tau_0}{\tau_0-\tau}=0,
\]
so the $\mathcal{O}(\tau)$ term cancels exactly. The coefficient of $J_2$ does not vanish in general: $(1+\lambda_\tau)\tau^2-\lambda_\tau\tau_0^2\!=\!-\tau\tau_0+\mathcal{O}(\tau^2)$, so the residual is dominated by $\tau\tau_0\norm{J_2}$ to leading order. Bounding $\norm{J_2}$ by a constant $C$ depending only on $\ell_\text{max}$ and $M_i$, and recalling that $\tau\!\le\!\tau_0/2$ implies $\tau_0/(\tau_0-\tau)\!\le\!2$, we obtain $\norm{\E[J_\text{\stgc}(\tau)]-J_0}_2\!\le\!C\tau^2/\tau_0$.

The chain rule $\nabla_{\theta^d}\hat a^d\!=\!J(\tau)\cdot\partial\ell/\partial\theta^d$ then carries the same rate through to the parameter gradient. For comparison, plain GS uses $J(\tau)$ directly and incurs the full $\mathcal{O}(\tau)$ bias, while straight-through replaces the soft sample by a hard $\argmax$ at execution and back-propagates as if $\tau\!=\!0$, incurring an $\mathcal{O}(1)$ bias that does not vanish under annealing.

A note on variance. Although \stgc evaluates the GS Jacobian at two temperatures, the variance of the combined estimator is bounded by $(1+\lambda_\tau)^2\mathrm{Var}[J(\tau)]+\lambda_\tau^2\mathrm{Var}[J(\tau_0)]\!\le\!4\mathrm{Var}[J(\tau_0)]+\mathcal{O}(\tau^2/\tau_0^2)\mathrm{Var}[J(\tau)]$, which is dominated by the variance at the reference temperature $\tau_0$ and is therefore not exploded by annealing $\tau\!\to\!0$. This is in stark contrast to plain GS, whose variance grows as $\mathcal{O}(1/\tau^2)$ as $\tau\!\to\!0$. \stgc thus offers a Pareto-improving point on the bias--variance frontier of discrete reparameterization. 

\section{Proof of Theorem~\ref{thm:joint}, Convergence Part}
\label{app:convergence}

The proof follows the trust-region template of \citet{achiam2017cpo} extended to the multi-agent setting following \citet{kuba2022happo}, with two changes: the constraint linearization carries the additional Lyapunov shift of \eqref{eq:lyap}, and the discrete-branch gradient bias enters as a lower-order term controlled by Theorem~\ref{thm:stgc}.

\textit{Step 1: Per-agent improvement.} Fix iteration $t$ and consider agent $i$'s constrained update \eqref{eq:tr_update}. By the multi-agent trust-region performance-difference lemma \citep{kuba2022happo}, for any feasible $\theta_i$ satisfying $\bar D_\text{KL}(\bm\pi_{\theta_i}\Vert\bm\pi_i^t)\!\le\!\delta_\text{TR}$,
\[
V^{(\pi_i^{t+1},\bm\pi_{-i}^t)}-V^{\bm\pi^t}\!\ge\!\tfrac{1}{1-\gamma}\E_{d^{\bm\pi^t}}[A_i^t]-\tfrac{2\gamma\epsilon_\pi}{(1-\gamma)^2}\delta_\text{TR},
\]
where $\epsilon_\pi\!=\!\max_{s,\bm a}|A^{\bm\pi^t}(s,\bm a)|$ is the magnitude of the advantage and $A_i^t$ is the agent-$i$ advantage. Substituting Theorem~\ref{thm:stgc}, the expectation $\E[A_i^t]$ differs from its noise-free counterpart by at most $\mathcal{O}(\tau^2)$, which is absorbed into the trust-region term.

\textit{Step 2: Sequential telescoping.} Summing the per-agent improvement bound over the $N$ agents in the fixed sequential order introduces a non-stationarity term that captures how later agents' updates respond to earlier ones:
\begin{align*}
V^{\bm\pi^{t+1}}-V^{\bm\pi^t}&\ge\tfrac{1}{1-\gamma}\textstyle\sum_i\E[A_i^t]-\tfrac{2N\gamma\epsilon_\pi}{(1-\gamma)^2}\delta_\text{TR}\\
&\quad-\tfrac{N(N-1)\gamma\epsilon_\pi^2}{(1-\gamma)^3}\delta_\text{TR}.
\end{align*}
The last term is the telescoping error of sequential updates: an agent updated after agent $j$ sees a slightly different joint policy than what agent $j$ optimised against, and this discrepancy contributes a second-order term that scales as $N^2$ rather than $N$.

\textit{Step 3: Telescope over $K$.} Choosing $\delta_\text{TR}\!=\!\mathcal{O}(1/K)$ and using a standard regret-balancing argument \citep{achiam2017cpo}, we obtain
\[
\tfrac{1}{K}\textstyle\sum_{t=1}^K[V^*-V^{\bm\pi^t}]\!\le\!\tilde{\mathcal{O}}\!\Big(\tfrac{N\sigma_A}{\sqrt K}+\tfrac{N^2}{(1-\gamma)^3\sqrt K}\Big).
\]
The first term is the standard advantage-variance contribution; the second comes from the sequential telescoping. The bias contribution $\beta_\text{GS}\!=\!\mathcal{O}(\tau^2)$ from Theorem~\ref{thm:stgc} contributes a term that is at most $\sum_t\tau(t)^2\!=\!\mathcal{O}(1)$ for the standard exponential schedule, and is therefore strictly dominated. 

\section{Proof of Theorem~\ref{thm:joint}, Safety Part}
\label{app:safety}

We prove the $\mathcal{O}(\sqrt{K})$ cumulative violation bound by case-analysing whether the iterate is feasible or infeasible at each step, and showing that in either case the violation telescopes appropriately.

Let $E_k^t\!=\!\max(0,V_{c_k}^{\bm\pi^t}(s_0)-d_k)$ denote the violation at iterate $t$. We analyse the two cases.

\textit{Case A: feasible iterate ($V_{c_k}^{\bm\pi^t}\!\le\!d_k$).} The update is the trust-region step \eqref{eq:tr_update}, whose linearized Lyapunov constraint is precisely the first-order form of \eqref{eq:lyap}. Combining this constraint with the trust-region radius $\delta_\text{TR}$ and the standard Lipschitz arguments of \citet{achiam2017cpo}, the cost-value drift is bounded by $V_{c_k}^{\bm\pi^{t+1}}-V_{c_k}^{\bm\pi^t}\!\le\!\mathcal{O}(\delta_\text{TR}/(1-\gamma))$. Since the iterate was feasible, this drift produces a violation of at most $\mathcal{O}(\delta_\text{TR}/(1-\gamma))$ at the next step.

\textit{Case B: infeasible iterate ($V_{c_k}^{\bm\pi^t}\!>\!d_k$).} The update is the recovery step \eqref{eq:recovery}, which descends $V_{c_k}^{\bm\pi}$ at rate $\eta_\text{rec}\norm{\nabla V_{c_k}}^2$. Using the Lyapunov contraction \eqref{eq:lyap}, this descent yields $E_k^{t+1}\!\le\!(1-\alpha_k)E_k^t+\mathcal{O}(\delta_\text{TR}/(1-\gamma))$.

\textit{Telescoping.} Combining the two cases and summing over $t\!=\!1,\ldots,K$, the recovery term contracts geometrically:
\[
\sum_t E_k^t\!\le\!\tfrac{E_k^0}{\alpha_k}+\tfrac{K\cdot\mathcal{O}(\delta_\text{TR}/(1-\gamma))}{\alpha_k}.
\]
Choosing $\delta_\text{TR}\!=\!\mathcal{O}\big(\sqrt{(1-\gamma)/K}\big)$ (which is the optimal balance between trust-region accuracy and violation drift) gives $\sum_t E_k^t\!=\!\mathcal{O}\big(\tfrac{1}{\alpha_k}\sqrt{K/(1-\gamma)}\big)$.

\textit{Effect of bias.} A naive composition would also incur a $K\beta_\text{GS}$ term, but Theorem~\ref{thm:stgc} reduces this to $K\!\cdot\!\mathcal{O}(\tau^2)$, and the standard annealing schedule $\tau(t)\!=\!\tau_0\beta^t$ makes $\sum_t\tau(t)^2\!=\!\mathcal{O}(1)$, so this contribution is absorbed into the $\mathcal{O}(\sqrt K)$ rate. 

\section{Sample-Complexity Theorem}
\label{app:sample}

We state and prove the third theoretical guarantee, which we deferred from the main text to keep the number of in-line theorems small.

\begin{theorem}[Sample-Complexity Reduction]
\label{thm:sample}
Suppose $\norm{Q^*-Q_\text{phys}}_\infty\!\le\!\epsilon_\text{phys}$ and define the explained-variance ratio $R_\text{phys}^2\!:=\!1-\mathrm{Var}[Q^*-Q_\text{phys}]/\mathrm{Var}[Q^*]\!\in\![0,1]$. Then to obtain an $\epsilon$-accurate critic, \method requires $n_\text{\method}\!=\!\tilde{\mathcal{O}}\big((1-R_\text{phys}^2)\,d_\text{full}/\epsilon^2\big)$ environment interactions, versus $n_\text{baseline}\!=\!\tilde{\mathcal{O}}\big(d_\text{full}/\epsilon^2\big)$ for a physics-agnostic critic of the same architecture.
\end{theorem}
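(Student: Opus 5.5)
The plan is to reduce the statement to a standard variance-dependent generalization bound for least-squares regression, applied to the residual target $Q^*-Q_\text{phys}$ instead of $Q^*$ itself. Under \pirc, see \eqref{eq:pirc}, the learner only fits $Q_{\phi_\text{res}}$, and the TD target $y$ has $Q_\text{phys}(s',\bm a')$ subtracted out deterministically because $Q_\text{phys}$ is frozen and closed-form. The regression problem seen by the residual network therefore has target $Q^*-Q_\text{phys}$ plus the usual bootstrapping noise. The baseline regresses onto $Q^*$ with the same architecture. The two problems differ only in the scale of the target.

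\textbf{Step 1: reduce fitted-Q/TD to regression.} First I will invoke a fitted-Q-evaluation argument for a fixed policy. Because the Bellman operator is a $\gamma$-contraction in $\norm{\cdot}_\infty$, an $\epsilon$-accurate critic follows from $\tilde{\mathcal{O}}(\log(1/\epsilon)/(1-\gamma))$ rounds of regression, each accurate to $\epsilon(1-\gamma)$. Constants in $\gamma$ are absorbed into $\tilde{\mathcal{O}}$, as in the statement.

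\textbf{Step 2: variance-sensitive uniform convergence.} For each round, I will use a Bernstein-type localized bound, e.g.\ local Rademacher complexity or a pseudo-dimension bound with capacity $d_\text{full}$. For a target $f$ whose centred variance is $v$, this gives excess risk at most $\tilde{\mathcal{O}}(\sqrt{v\,d_\text{full}/n}+B d_\text{full}/n)$, where $B$ is the range. Setting this equal to $\epsilon$, the leading term gives $n=\tilde{\mathcal{O}}(v\,d_\text{full}/\epsilon^2)$. For the baseline $v=\mathrm{Var}[Q^*]$; normalizing $\mathrm{Var}[Q^*]$ to order one, as the statement implicitly does, gives $n_\text{baseline}$. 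For \method, $v=\mathrm{Var}[Q^*-Q_\text{phys}]=(1-R_\text{phys}^2)\mathrm{Var}[Q^*]$ by definition of $R^2_\text{phys}$, which gives $n_\text{\method}$. The constant offset of the residual is learned by a bias unit at no extra cost. The assumption $\norm{Q^*-Q_\text{phys}}_\infty\le\epsilon_\text{phys}$ supplies the range bound $B\le 2\epsilon_\text{phys}$ for the lower-order $B d_\text{full}/n$ term. Without it, that term would scale with the full range and could dominate.

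\textbf{Main obstacle.} The hard part is that the TD target noise $r+\gamma Q(s',\bm a')-Q(s,\bm a)$ is not itself scaled by $1-R^2_\text{phys}$. The irreducible stochasticity of rewards and transitions contributes to both learners equally. To handle this, I would first try to interpret $\mathrm{Var}[Q^*]$ in the statement as the total variance of the regression target, including the bootstrapped noise. I would then argue that the frozen $Q_\text{phys}(s',\bm a')$ term in the target $y$ also cancels its share of the next-state variance, so the target's conditional variance is that of the residual. This needs a decomposition of the conditional variance of $y$ into a physics-explained part and a residual part. It holds when the physics model captures the dominant stochastic effects, such as channel fading entering through $C_{ij}$. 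If this fails, the fallback is to state the reduction for the approximation- and estimation-dominated regime and keep an additive noise floor inside $\tilde{\mathcal{O}}$.
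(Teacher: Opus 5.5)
Your Step~2 is where the argument fails, and it is also where the content of the theorem lies. In Bernstein/Talagrand-type or local-Rademacher bounds for least squares, the variance proxy is that of the excess loss $\ell_f-\ell_{f^*}$, namely $\E\big[(f-f^*)^2\big(f-f^*+2(f^*-y)\big)^2\big]$. This involves the conditional noise $y-f^*$ and the deviation $f-f^*$, but never the spread $\mathrm{Var}[f^*]$ of the regression function across state--action pairs. Your expression $\sqrt{v\,d_\text{full}/n}$ with $v=\mathrm{Var}[f^*]$ is the form of Bernstein's inequality for estimating a mean, with the wrong variance plugged in. It is not a uniform bound for regression.

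You can check directly that no $(1-R_\text{phys}^2)$ factor can come out of a bound that sees the network only through $d_\text{full}$. Because $Q_\text{phys}$ is frozen, fitting the residual class $\mathcal{F}$ to $y-Q_\text{phys}(s,\bm a)$, with the bootstrapped target $y$ used by \pirc, is exactly the baseline procedure run with the shifted class $Q_\text{phys}+\mathcal{F}$. A fixed shift leaves pseudo-dimension, covering numbers and Rademacher complexity unchanged. Residual learning can therefore help only through approximation error or an instance-dependent localization (range or norm). Clipping the residual class at $\epsilon_\text{phys}$ is such a localization, but:
\begin{itemize}
\item its gain is governed by the sup-norm quantity $\epsilon_\text{phys}^2$, which upper-bounds $\mathrm{Var}[Q^*-Q_\text{phys}]$ rather than being bounded by it, so it does not give $(1-R_\text{phys}^2)$;
\item in a correct bound the range enters the leading term through the Bernstein constant, not only your lower-order $B\,d_\text{full}/n$ term.
\end{itemize}

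Your proposed fix for the ``main obstacle'' also fails. The residual is regressed onto $y-Q_\text{phys}(s,\bm a)$, and subtracting a deterministic function of $(s,\bm a)$ leaves $\mathrm{Var}[y\mid s,\bm a]$ unchanged. The term $Q_\text{phys}(s',\bm a')$ inside $y$ is not cancelled; it is part of the bootstrapped next-state value, playing the role of $Q_{\bar\phi}(s',\bm a')$ in the baseline. At the common fixed point both targets have the conditional law of $r+\gamma Q^{\bm\pi}(s',\bm a')$. So the noise driving estimation error is identical for both learners, and your fallback noise floor is not multiplied by $(1-R_\text{phys}^2)$: the fallback proves a different statement. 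Step~1 has a smaller issue: regression controls $L_2(\mu)$ error, while your contraction argument is in $\norm{\cdot}_\infty$, so you need a concentrability assumption.

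Following the paper will not close the gap. Its proof asserts the mean-estimation decomposition $\E\norm{Q_{\phi_\text{res}}-(Q^*-Q_\text{phys})}^2=\mathrm{Var}[Q^*-Q_\text{phys}]/n+\text{Bias}^2$. That is the same identification of across-state spread with sampling noise that your Step~2 makes. It then adds an unproved effective-dimension contraction $d_\text{res}=(1-R_\text{phys}^2)d_\text{full}$. Your version is cleaner: it keeps $d_\text{full}$ fixed, consistent with ``same architecture''; it uses $R_\text{phys}^2$ exactly as defined; and it actually uses $\epsilon_\text{phys}$. A genuine proof still needs one more ingredient, either:
\begin{itemize}
\item an instance-dependent capacity bound in which the cost of fitting the residual scales with a magnitude comparable to $\mathrm{Var}[Q^*-Q_\text{phys}]$, or
\item an explicit modelling assumption under which the effective regression noise is itself the unexplained variance.
\end{itemize}
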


\begin{proof}
For the residual critic trained with $n$ samples on the standard one-step TD loss, the population MSE decomposes as
$\E\norm{Q_{\phi_\text{res}}-(Q^*-Q_\text{phys})}^2\!=\!\mathrm{Var}[Q^*-Q_\text{phys}]/n+\text{Bias}^2$. Because $Q_\text{phys}$ is deterministic (a closed-form function of state and action), we have $\mathrm{Var}[Q^*-Q_\text{phys}]\!=\!\mathrm{Var}[Q^*]-2\mathrm{Cov}[Q^*,Q_\text{phys}]+\mathrm{Var}[Q_\text{phys}]$. The right-hand side equals $(1-R_\text{phys}^2)\mathrm{Var}[Q^*]$ by the standard definition of the coefficient of determination, where $R_\text{phys}^2\!=\!\mathrm{Cov}[Q^*,Q_\text{phys}]^2/(\mathrm{Var}[Q^*]\mathrm{Var}[Q_\text{phys}])$ when normalised. Hence achieving $\epsilon^2$ MSE requires $n\!=\!\mathcal{O}((1-R_\text{phys}^2)\mathrm{Var}[Q^*]/\epsilon^2)$ versus $\mathcal{O}(\mathrm{Var}[Q^*]/\epsilon^2)$ for the physics-agnostic critic.

The function-class complexity contracts in the same multiplicative factor: under standard Rademacher-complexity generalization bounds for ReLU MLPs, the effective dimension of the hypothesis class needed to fit the residual scales as $d_\text{res}\!=\!(1-R_\text{phys}^2)d_\text{full}$ to leading order, since the residual is supported on a lower-variance regime than the full $Q^*$. Combining the variance and complexity reductions and using standard PAC bounds yields the stated $\tilde{\mathcal{O}}$ rate.
\end{proof}

The practical meaning of Theorem~\ref{thm:sample} is that a well-chosen physics model with $R_\text{phys}^2\!=\!0.7$ (a typical value for the Shannon-capacity term in UAV-MEC, as we measured directly) yields a $3.3\times$ reduction in sample complexity, which in our experiments shows up as the $0.62\times$ convergence-speed drop when \pirc is removed (Table~\ref{tab:ablation}).

\section{Computational Complexity}
\label{app:complexity}

Per training iteration, \method performs (i) one batch of actor forward passes at cost $\mathcal{O}(NB(d_h^2+M_id_h))$, where $d_h$ is the hidden dimension, $M_i$ the discrete-branch size, and $B$ the batch; (ii) one centralized $Q$-critic forward/backward at $\mathcal{O}(B(d_h^2+d_h(M+p)))$; (iii) $K$ cost-critic updates at $\mathcal{O}(KBd_h^2)$; (iv) the closed-form physics term $Q_\text{phys}$ at $\mathcal{O}(NN_\text{UE}N_\text{fog})$ which is negligible; and (v) the additional reference-temperature GS evaluation for \stgc at $\mathcal{O}(NBd_h)$. The dominant cost is $\mathcal{O}(NUBd_h^2)$, asymptotically identical to MADDPG. In wall-clock terms on $4\!\times\!$RTX 4090, \stgc adds approximately $18\%$ overhead and the physics evaluation approximately $2\%$; both are dwarfed by the $\sim\!38\%$ convergence-speed saving (Table~\ref{tab:ablation}).

\section{Experimental Setup}
\label{app:experiments}

\textbf{Multi-UAV MEC.} $N\!=\!4$ UAVs serve $N_\text{UE}\!=\!20$ ground users via $N_\text{fog}\!=\!2$ fog servers across a $600\!\times\!600$\,m area at fixed altitude. The hybrid action consists of a discrete offload destination $a^d\!\in\!\{1,\ldots,N_\text{fog}+1\}$ (the $+1$ option representing local computation) and a continuous triple $a^c\!=\!(v,\theta,\alpha)$ encoding velocity magnitude, heading, and offload ratio, all box-constrained per mode. Constraints are $E_i\!\le\!E_\text{max}$ (energy budget) and $\mathrm{Cov}\!\ge\!\mathrm{Cov}_\text{min}$ (coverage lower bound). The channel model is 3GPP TR 38.901 \citep{3gpp_channel} with the air-to-ground path loss of \citet{al2014optimal_uav_placement}. Episode length is 200 steps with a total of 20K training episodes.

\textbf{Autonomous Intersection Management.} Eight vehicles cross a 4-way intersection on the SMARTS simulator \citep{zhou2021smarts}. Each vehicle selects a discrete target lane and a continuous control triple (accel, brake, steering) per step, with collision-avoidance and lane-keeping constraints. We use 15K training episodes.

\textbf{SMAC (hybrid).} We adapt the StarCraft Multi-Agent Challenge \citep{samvelyan2019smac} by augmenting each agent's action with a continuous repositioning offset of $\le\!1\,$grid unit in addition to its standard discrete target. We evaluate on five hard maps---\texttt{3s5z\_vs\_3s6z}, \texttt{corridor}, \texttt{6h\_vs\_8z}, \texttt{MMM2}, and \texttt{27m\_vs\_30m}---with 5M environment steps each.

\textbf{Baselines.} We compare against MADDPG \citep{lowe2017maddpg}, MATD3 \citep{ackermann2019matd3}, FACMAC \citep{peng2021facmac}, MAPPO \citep{yu2022mappo}, HAPPO \citep{kuba2022happo}, MAPPO-Lagrangian \citep{ray2019safetygym}, MACPO \citep{gu2021macpo}, MADAC \citep{gu2024safe_marl_gne}, and Shielded RL \citep{elsayed2021shield}. When the baseline does not natively handle hybrid actions, we adopt either a discretized variant ($^\dagger$) or a continuous relaxation that argmaxes the discrete branch from a relaxed embedding ($^\ddagger$).

\textbf{Architectures.} The discrete sub-policy is a 3-layer MLP $256\!\to\!256\!\to\!M_i$; the continuous sub-policy is a 3-layer MLP $256\!\to\!256\!\to\!2p_i$ conditioned on the one-hot discrete sample; the centralized critic is a 4-layer MLP $512\!\to\!512\!\to\!512\!\to\!1$; each cost critic is a 3-layer MLP $256\!\to\!256\!\to\!1$. All hidden layers use ReLU activations and LayerNorm.

\textbf{Hyperparameters.} Actor LR $3\!\times\!10^{-4}$, critic LR $10^{-3}$, batch $256$, replay buffer $10^6$ transitions, target soft-update rate $\tau_\text{soft}\!=\!0.005$. Gumbel-Softmax schedule: $\tau_0\!=\!1.0$, $\tau_\text{min}\!=\!0.1$, decay $\beta\!=\!0.9995$; \stgc reference temperature $\tau_0\!=\!1.0$. Trust region $\delta_\text{TR}\!=\!0.01$, Lyapunov decay $\alpha_k\!=\!0.1$, recovery learning rate $\eta_\text{rec}\!=\!10^{-4}$. Compute: $4\!\times\!$RTX 4090. UAV-MEC takes approximately $8$h for 20K episodes; SMAC approximately $12$h for 5M steps; AIM approximately $6$h for 15K episodes.

\section{Hyperparameter Sensitivity}
\label{app:sensitivity}

\begin{table}[H]
\centering
\caption{Hyperparameter sensitivity on UAV-MEC. Defaults are $\bigstar$.}
\label{tab:sensitivity}
\renewcommand{\arraystretch}{1.05}
\setlength{\tabcolsep}{2.5pt}
\resizebox{\columnwidth}{!}{%
\begin{tabular}{l|cccc|cccc}
\toprule
\rowcolor{tablehead}
& \multicolumn{4}{c|}{\textbf{Reward}\,\up} & \multicolumn{4}{c}{\textbf{Total Viol.}\,\down} \\
\rowcolor{tablehead}
\textsc{Hyperparam.} & low & mid & $\bigstar$ & high & low & mid & $\bigstar$ & high \\
\midrule
$\delta_\text{TR}$ & $-4.81$ & $-4.74$ & \best{$-4.68$} & $-4.92$ & $1.6$ & $1.5$ & \best{$1.4$} & $2.4$ \\
$\alpha_k$ & $-4.79$ & \best{$-4.68$} & $-4.71$ & $-4.83$ & $1.9$ & \best{$1.4$} & $1.6$ & $1.7$ \\
$\tau_0$ & $-4.86$ & \best{$-4.68$} & $-4.74$ & $-5.01$ & $1.5$ & \best{$1.4$} & $1.5$ & $2.3$ \\
$\omega_1{:}\omega_2$ & $-4.97$ & $-4.78$ & \best{$-4.68$} & $-4.81$ & $2.5$ & $1.7$ & \best{$1.4$} & $1.6$ \\
\bottomrule
\end{tabular}}
\end{table}

\method is robust within $\pm 10\%$ of every default, with reward varying by less than $5\%$ and total violations remaining well below the closest baseline.

\section{Additional SMAC Results}
\label{app:smac}

\begin{table}[H]
\centering
\caption{Per-map SMAC-hybrid win rates (\%). Mean over 5 seeds.}
\label{tab:smac_full}
\renewcommand{\arraystretch}{1.05}
\setlength{\tabcolsep}{3pt}
\resizebox{\columnwidth}{!}{%
\begin{tabular}{l|ccccc|c}
\toprule
\rowcolor{tablehead}
\textsc{Method} & \texttt{3s5z\_vs\_3s6z} & \texttt{corr.} & \texttt{6h\_vs\_8z} & \texttt{MMM2} & \texttt{27m\_vs\_30m} & Avg. \\
\midrule
QMIX & $61.2$ & $84.3$ & $9.4$ & $87.5$ & $32.1$ & $54.9$ \\
\rowcolor{tablerow}MAPPO & $68.7$ & $91.6$ & $87.5$ & $91.4$ & $80.3$ & $83.9$ \\
FACMAC & $63.4$ & $86.1$ & $42.3$ & $89.7$ & $48.6$ & $66.0$ \\
\rowcolor{tablerow}HAPPO & $69.1$ & $90.4$ & $84.2$ & $90.8$ & $77.6$ & $82.4$ \\
MACPO & $66.8$ & $88.4$ & $73.2$ & $90.1$ & $65.9$ & $76.9$ \\
\rowcolor{tablerow}MADAC & $67.3$ & $89.1$ & $76.4$ & $90.4$ & $68.7$ & $78.4$ \\
\midrule
\ours{} & \best{$72.4$} & \best{$93.7$} & \best{$89.6$} & \best{$93.2$} & \best{$82.5$} & \best{$86.3$} \\
\bottomrule
\end{tabular}}
\end{table}

\section{Extended Discussion}
\label{app:discussion}

\textbf{Why a directed cycle and not a list?} The three leaks of Lemma~\ref{lem:coupling} share multiplicative factors: $\eta_s$ appears in both the F1$\to$F2 and F2$\to$F3 leaks, and $\omega/\tau$ couples the F3$\to$F1 leak to the temperature of \stgc. Eliminating only one leak therefore reduces but does not stop the per-iteration violation; only when all three principles hold simultaneously does the cumulative violation contract to $\mathcal{O}(\sqrt K)$.

\textbf{Why Richardson--Romberg rather than control variates?} Control-variate approaches to discrete reparameterization \citep{paulus2020rao,shekhovtsov2023cold} reduce variance but not bias to leading order. Our setting is bias-limited (Principle~\ref{prn:p1}), so a bias-cancellation scheme is fundamentally more appropriate. Richardson--Romberg is the canonical bias-cancellation method, has been used to accelerate stochastic gradient descent \citep{bach2021effectiveness}, and---to our knowledge---has not previously been applied to Gumbel-Softmax.

\textbf{Why decompose value, not reward?} An additive shaping term $r\!+\!\omega Q_\text{phys}$ leaves the optimal policy invariant only under the potential-function condition $Q_\text{phys}(s,a)\!=\!\Phi(s)-\gamma\E[\Phi(s')]$ of \citet{ng1999policy_invariance}; closed-form physics models such as Shannon capacity do not satisfy this condition. Decomposing the critic, in contrast, leaves the Bellman fixed point intact for \emph{any} bounded $Q_\text{phys}$ and only shrinks the function class to be regressed.

\textbf{Connection to recent safe-MARL theory.} \citet{gu2024safe_marl_gne} establishes generalized-Nash convergence for safe MARL but requires an unbiased policy gradient, an assumption our Lemma~\ref{lem:coupling} shows is broken whenever the action space is hybrid. \stgc closes this gap and is the missing ingredient for safe-MARL theory in hybrid-action environments.

\textbf{When does the framework not help?} If physics is uninformative ($R_\text{phys}^2\!\to\!0$), the sample-complexity gain vanishes and the residual critic reduces to a standard centralized one; if the safe set is non-strict (Slater fails), no algorithm can recover feasibility and the recovery step \eqref{eq:recovery} repeats indefinitely. We view both regimes as legitimately hard and outside the scope of any safe-MARL guarantee.


\end{document}